\documentclass[10pt,twocolumn]{article}
\usepackage[
left=2.00cm,
right=2.00cm,
top=2.00cm,
bottom=2.00cm
]{geometry}
\setlength{\columnsep}{0.25 in}

\usepackage{titlesec}
\titleformat{\section}[hang]
{\large\bfseries}
{\thesection.}{0.5em}{}

\titleformat{\subsection}[hang]
{\bfseries}
{\thesubsection.}{0.5em}{}

%
%


\usepackage{amsmath, amssymb, amsthm}
\usepackage{color, soul}
\bibliographystyle{unsrtnat}
\usepackage{algorithm, algorithmic, multicol}
\usepackage[titletoc,title]{appendix}
\usepackage{graphicx}
\usepackage{bbm}

\usepackage[compress]{natbib}
\bibliographystyle{abbrvnat}
\usepackage[utf8]{inputenc} 
\usepackage[T1]{fontenc}    
\usepackage{hyperref}       
\usepackage{url}            
\usepackage{booktabs}       
\usepackage{amsfonts}       
\usepackage{nicefrac}       
\usepackage{microtype}      

\DeclareSymbolFont{bbold}{U}{bbold}{m}{n}
\DeclareSymbolFontAlphabet{\mathbbold}{bbold}

\newtheorem{theorem}{Theorem}
\newtheorem{definition}[theorem]{Definition}
\newtheorem{lemma}[theorem]{Lemma}
\newtheorem{corollary}[theorem]{Corollary}
\newtheorem*{remark}{Remark}

\newcommand\E{\mathbb{E}} 
\newcommand\prob{\mathbb{P}} 
\newcommand\ind{\mathbbold{1}}  
\newcommand\cost{\textbf{c}} 
\newcommand\normcost{\textbf{d}} 
\newcommand\reducedcostseor{\mathcal{C}^{eor}_0} 
\newcommand\unifv{\textbf{u}} 
\newcommand\samplex{\textbf{x}} 
\newcommand\stdv{\textbf{e}} 
\newcommand\cumv{\textbf{s}} 
\newcommand\weightv{\textbf{w}} 
\newcommand\truey{Y} 
\newcommand\predy{\hat{\textbf{y}}} 
\newcommand\potential{\phi} 
\newcommand\probv{\textbf{p}} 
\newcommand\zerov{\textbf{0}} 
\newcommand\rankLoss{L_{\text{rnk}}} 
\newcommand\logLoss{L_{\text{log}}} 
\newcommand\hingeLoss{L_{\text{hinge}}} 
\newcommand\weakpred{\textbf{h}} 
\newcommand\Xv{\textbf{X}} 


\usepackage{footnote}
\makesavenoteenv{tabular}
\makesavenoteenv{table}

\title{Online Boosting Algorithms for Multi-label Ranking}

\author{
  Young Hun Jung\\
  Department of Statistics\\
  University of Michigan\\
  Ann Arbor, MI 48109 \\
  \texttt{yhjung@umich.edu} \\
  \and
  Ambuj Tewari\\
  Department of Statistics\\
  University of Michigan\\
  Ann Arbor, MI 48109 \\
  \texttt{tewaria@umich.edu} \\
}

\begin{document}

%

%

%
%
%
\twocolumn[
\maketitle]


\begin{abstract}
We consider the multi-label ranking approach to multi-label learning. Boosting is a natural method for multi-label ranking as it aggregates weak predictions through majority votes, which can be directly used as scores to produce a ranking of the labels. We design online boosting algorithms with provable loss bounds for multi-label ranking. We show that our first algorithm is optimal in terms of the number of learners required to attain a desired accuracy, but it requires knowledge of the edge of the weak learners. We also design an adaptive algorithm that does not require this knowledge and is hence more practical. Experimental results on real data sets demonstrate that our algorithms are at least as good as existing batch boosting algorithms. 
\end{abstract}

\section{INTRODUCTION}

\textit{Multi-label learning} has important practical applications (e.g., \cite{schapire2000boostexter}), and its theoretical properties continue to be studied (e.g., \cite{koyejo2015consistent}). In contrast to standard multi-class classifications, multi-label learning problems allow multiple correct answers. In other words, we have a fixed set of basic labels, and the actual label is a \textit{subset} of the basic labels. Since the number of subsets increases exponentially as the number of basic labels grows, thinking of each subset as a different class leads to intractability.

It is quite common in applications for the multi-label learner to output a \emph{ranking} of the labels on a new test instance. For example, the popular MULAN library designed by \cite{mulan} allows the output of multi-label learning to be a multi-label ranker. In this paper, we focus on the multi-label ranking (MLR) setting. That is to say, the learner produces a \textit{score vector} such that a label with a higher score will be ranked above a label with a lower score.  We are particularly interested in \textit{online} MLR settings where the data arrive sequentially. The online framework is designed to handle a large volume of data that accumulates rapidly. In contrast to classical \textit{batch learners}, which observe the entire training set, online learners do not require the storage of a large amount of data in memory and can also adapt to non-stationarity in the data by updating the internal state as new instances arrive. 

\textit{Boosting}, first proposed by \cite{freund1997decision}, aggregates mildly powerful learners into a strong learner. It has been used to produce state-of-the-art results in a wide range of fields (e.g., \cite{korytkowski2016fast} and \cite{zhang2014boosted}). Boosting algorithms take weighted majority votes among weak learners' predictions, and the cumulative votes can be interpreted as a score vector. This feature makes boosting very well suited to MLR problems.

The theory of boosting has emerged in batch binary settings and became arguably complete (cf. \cite{schapire2012boosting}), but its extension to an online setting is relatively new. To our knowledge, \cite{chen2012online} first introduced an online boosting algorithm with theoretical justifications, and \cite{beygelzimer2015optimal} pushed the state-of-the-art in online binary settings further by proposing two online algorithms and proving optimality of one. Recent work by \cite{jung2017online} has extended the theory to multi-class settings, but their scope remained limited to single-label problems. 

In this paper, we present the first online MLR boosting algorithms along with their theoretical justifications. Our work is mainly inspired by the online single-label work (\cite{jung2017online}). The main contribution is to allow general forms of weak predictions whereas the previous online boosting algorithms only considered homogeneous prediction formats. By introducing a general way to encode weak predictions, our algorithms can combine binary, single-label, and MLR predictions. 

After introducing the problem setting, we define an \textit{edge} of an online learner over a random learner (Definition \ref{def:onlineWLC}). Under the assumption that every weak learner has a known positive edge, we design an optimal way to combine their predictions (Section \ref{section:optimal}). In order to deal with practical settings where such an assumption is untenable, we present an adaptive algorithm that can aggregate learners with arbitrary edges (Section \ref{section:adaptive}). In Section \ref{section:experiments}, we test our two algorithms on real data sets, and find that their performance is often comparable with, and sometimes better than, that of existing batch boosting algorithms for MLR.

\section{PRELIMINARIES}
The number of candidate labels is fixed to be $k$, which is known to the learner. Without loss of generality, we may write the labels using integers in $[k]:=\{1, \cdots, k\}$. We are allowing multiple correct answers, and the label $\truey_{t}$ is a subset of $[k]$. The labels in $\truey_{t}$ is called \textit{relevant}, and those in  $\truey_{t}^{c}$, \textit{irrelevant}. At time $t=1, \cdots, T$, an \textit{adversary} sequentially chooses a labeled example $(\samplex_{t}, \truey_{t}) \in \mathcal{X} \times 2^{[k]}$, where $\mathcal{X}$ is some domain. Only the instance $\samplex_{t}$ is shown to the learner, and the label $\truey_{t}$ is revealed once the learner makes a prediction $\predy_{t}$. As we are interested in MLR settings, $\predy_{t}$ is a $k$ dimensional score vector. The learner suffers a loss $L^{\truey_{t}}(\predy_{t})$ where the loss function will be specified later in Section \ref{section:optimal}. 

In our boosting framework, we assume that the learner consists of a \textit{booster} and $N$ \textit{weak learners}, where $N$ is fixed before the training starts. This resembles a \textit{manager-worker} framework in that booster distributes tasks by specifying losses, and each learner makes a prediction to minimize the loss. Booster makes the final decision by aggregating weak predictions. Once the true label is revealed, the booster shares this information so that weak learners can update their parameters for the next example.

\subsection{Online Weak Learners and Cost Vector}
We keep the form of weak predictions $\weakpred_{t}$ general in that we only assume it is a distribution over $[k]$. This can in fact represent various types of predictions. For example, a \textit{single-label prediction}, $l \in [k]$, can be encoded as a standard basis vector $\stdv_{l}$, or a \textit{multi-label prediction} $\{l_{1}, \cdots, l_{n}\}$ by $\frac{1}{n}\sum_{i=1}^{n} \stdv_{l_{i}}$. Due to this general format, our boosting algorithm can even combine weak predictions of different formats. This implies that if a researcher has a strong family of binary learners, she can simply boost them without transforming them into multi-class learners through well known techniques such as \textit{one-vs-all} or \textit{one-vs-one} \citep{allwein2000reducing}. 

We extend the \textit{cost matrix framework}, first proposed by \cite{mukherjee2013theory} and then adopted in online settings by \cite{jung2017online}, as a means of communication between booster and weak learners. At round $t$, booster computes a cost vector $\cost^{i}_{t}$ for the $i^{th}$ weak learner $WL^{i}$, whose prediction $\weakpred^{i}_{t}$ suffers the cost $\cost^{i}_{t}\cdot \weakpred^{i}_{t}$. The cost vector is unknown to $WL^{i}$ until it produces $\weakpred^{i}_{t}$, which is usual in online settings. Otherwise, $WL^{i}$ can trivially minimize the cost. 

A binary weak learning condition states a learner can attain over 50\% accuracy however the sample weights are assigned. In our setting, cost vectors play the role of sample weights, and we will define the edge of a learner in similar manner.

Finally, we assume that weak learners can take an importance weight as an input, which is possible for many online algorithms.

\subsection{General Online Boosting Schema}
We introduce a general algorithm schema shared by our algorithms. We denote the weight of $WL^{i}$ at iteration $t$ by $\alpha^{i}_{t}$. We keep track of weighted cumulative votes through $\cumv^{j}_{t} := \sum_{i=1}^{j}\alpha^{i}_{t}\weakpred^{i}_{t}$. That is to say, we can give more credits to well performing learners by setting larger weights. Furthermore, allowing negative weights, we can avoid poor learner's predictions. We call $\cumv^{j}_{t}$ a prediction made by \textit{expert} $j$. In the end, the booster makes the final decision by following one of these experts. 

The schema is summarized in Algorithm \ref{alg:skeleton}. We want to emphasize that the true label $\truey_{t}$ is only available once the final prediction $\predy_{t}$ is made. Computation of weights and cost vectors requires the knowledge of $\truey_{t}$, and thus it happens after the final decision is made. To keep our theory general, the schema does not specify which weak learners to use (line 4 and 12). The specific ways to calculate other variables such as $\alpha^{i}_{t}$, $\cost^{i}_{t}$, and $i_{t}$ depend on algorithms, which will be introduced in the next section.

\begin{algorithm}[h]
	\begin{algorithmic}[1]
		\STATE \textbf{Initialize:} $\alpha^{i}_{1}$ for $i \in [N]$ 
		\FOR {$t = 1, \cdots, T$}
		\STATE Receive example $\samplex_{t}$
		\STATE Gather weak predictions $\weakpred^{i}_{t} = WL^{i}(\samplex_{t}),~\forall i$
		\STATE Record expert predictions $\cumv^{j}_{t} := \sum_{i=1}^{j}\alpha^{i}_{t}\weakpred^{i}_{t}$
		\STATE Choose an index $i_{t} \in [N]$ 
		\STATE Make a final decision $\predy_{t}=\cumv^{i_{t}}_{t}$
		\STATE Get the true label $\truey_{t}$
		\STATE Compute weights $\alpha^{i}_{t+1},~ \forall i$
		\STATE Compute cost vectors $\cost^{i}_{t},~ \forall i$
		\STATE Weak learners suffer the loss $\cost^{i}_{t} \cdot \weakpred^{i}_{t}$
		\STATE Weak learners update the internal parameters
		\STATE Update booster's parameters, if any
		\ENDFOR
	\end{algorithmic}
	\caption{Online Boosting Schema}
	\label{alg:skeleton}
\end{algorithm}

\section{ALGORITHMS WITH THEORETICAL LOSS BOUNDS}
An essential factor in the performance of boosting algorithms is the predictive power of the individual weak learners. For example, if weak learners make completely random predictions, they cannot produce meaningful outcomes according to the booster's intention. We deal with this matter in two different ways. One way is to define an \textit{edge} of a learner over a completely random learner and assume all weak learners have positive edges. Another way is to measure each learner's \textit{empirical edge} and manipulate the weight $\alpha^{i}_{t}$ to maximize the accuracy of the final prediction. Even a learner that is worse than random guessing can contribute positively if we allow negative weights. The first method leads to OnlineBMR (Section \ref{section:optimal}), and the second to Ada.OLMR (Section \ref{section:adaptive}).

\subsection{Optimal Algorithm}
\label{section:optimal}
We first define the edge of a learner. Recall that weak learners suffer losses determined by cost vectors. Given the true label $\truey$, the booster chooses a cost vector from
\begin{align*}
	\reducedcostseor:=\{\cost \in [0, 1]^{k}~|~&\max_{l\in\truey}\cost[l] \leq \min_{r\notin\truey}\cost[r], \\
	 &\min_{l}\cost[l] = 0 \text{ and } \max_{l}\cost[l] = 1\},
\end{align*}
\noindent where the name $\reducedcostseor$ is used by \cite{jung2017online} and ``eor'' stands for \textit{edge-over-random}. Since the booster wants weak learners to put higher scores at the relevant labels, costs at the relevant labels should be less than those at the irrelevant ones. Restriction to $[0, 1]^{k}$ makes sure that the learner's cost is bounded. Along with cost vectors, the booster passes the importance weights $w_{t} \in [0, 1]$ so that the learner's cost becomes $w_{t} \cost_{t}\cdot \weakpred_{t}$.  

We also construct a \textit{baseline} learner that has edge $\gamma$. Its prediction $\unifv^{\truey}_{\gamma}$ is also a distribution over $[k]$ that puts $\gamma$ more probability for the relevant labels. That is to say, we can write
\begin{align*}
	\unifv^{\truey}_{\gamma}[l] =
	\begin{cases}
	a + \gamma &\text{ if } l \in \truey \\
	a &\text{ if } l \notin \truey, \\
	\end{cases}
\end{align*}
\noindent where the value of $a$ depends on the number of relevant labels, $|\truey|$.

Now we state our online weak learning condition.

\begin{definition}{\bf(OnlineWLC)}
\label{def:onlineWLC}
For parameters $\gamma, \delta \in (0, 1)$, and $S > 0$, a pair of an online learner and an adversary is said to satisfy OnlineWLC $(\delta, \gamma, S)$ if for any $T$, with probability at least $1-\delta$, the learner can generate predictions that satisfy 
\begin{equation*}
\label{eq:onlineWLC}
\sum_{t=1}^{T}w_{t}\cost_{t}\cdot \weakpred_{t} \leq \sum_{t=1}^{T}w_{t}\cost_{t}\cdot\unifv^{\truey_{t}}_{\gamma}+S.
\end{equation*}
$\gamma$ is called an edge, and $S$ an excess loss. 
\end{definition}

This extends the condition made by \citet[Definition 1]{jung2017online}. The probabilistic statement is needed as many online learners produce randomized predictions. The excess loss can be interpreted as a \textit{warm-up period}. Throughout this section, we assume our learners satisfy OnlineWLC $(\delta, \gamma, S)$ with a fixed adversary. 

\paragraph{Cost Vectors} The optimal design of a cost vector depends on the choice of loss. We will use $L^{\truey}(\cumv)$ to denote the loss without specifying it where $\cumv$ is the predicted score vector. The only constraint that we impose on our loss is that it is \textit{proper}, which implies that it is decreasing in $\cumv[l] \text{ for }l \in \truey$, and increasing in $\cumv[r] \text{ for }r \notin \truey$ (readers should note that ``proper loss'' has at least one other meaning in the literature). 

Then we introduce \textit{potential function}, a well known concept in game theory which is first introduced to boosting by \cite{schapire2001drifting}:
\begin{align}
\begin{split}
\label{eq:potential}
	\potential^{0}_{t}(\cumv) &:= L^{\truey_{t}}(\cumv) \\
	\potential^{i}_{t}(\cumv) &:= \E_{l \sim \unifv^{\truey_{t}}_{\gamma}} \potential^{i-1}_{t}(\cumv + \stdv_{l}).
\end{split}
\end{align}
\noindent The potential $\potential^{i}_{t}(\cumv)$ aims to estimate booster's final loss when $i$ more weak learners are left until the final prediction and $\cumv$ is the current state. It can be easily shown by induction that many attributes of $L$ are inherited by potentials. Being proper or convex are good examples. 

Essentially, we want to set 
\begin{equation}
\label{eq:optimalCost}
	\cost^{i}_{t}[l] := \potential^{N-i}_{t}(\cumv^{i-1}_{t} + \stdv_{l}),
\end{equation}
\noindent where $\cumv^{i-1}_{t}$ is the prediction of expert $i-1$. The proper property inherited by potentials ensures the relevant labels have less costs than the irrelevant. To satisfy the boundedness condition of $\reducedcostseor$, we normalize (\ref{eq:optimalCost}) to get 
\begin{equation}
\label{eq:normalizedCost}
	\normcost^{i}_{t}[l] := \frac{\cost^{i}_{t}[l]-\min_{r} \cost^{i}_{t}[r]}{\weightv^{i}[t]},
\end{equation}
\noindent where $\weightv^{i}[t]:=\max_{r} \cost^{i}_{t}[r] - \min_{r} \cost^{i}_{t}[r]$. Since Definition \ref{def:onlineWLC} assumes that $w_{t}\in [0, 1]$, we have to further normalize $\weightv^{i}[t]$. This requires the knowledge of $w^{i*} := \max_{t} \weightv^{i}[t]$. This is unavailable until we observe all the instances, which is fine because we only need this value in proving the loss bound.

\paragraph{Algorithm Details} The algorithm is named by OnlineBMR (Online Boost-by-majority for Multi-label Ranking) as its potential function based design has roots in the classical \textit{boost-by-majority} algorithm (\cite{schapire2001drifting}). In OnlineBMR, we simply set $\alpha^{i}_{t}=1$, or in other words, the booster takes simple cumulative votes. Cost vectors are computed using (\ref{eq:optimalCost}), and the booster always follows the last expert $N$, or $i_{t}=N$. These datails are summarized in Algorithm \ref{alg:OnlineBMR}.

\begin{algorithm}[h]
	\begin{algorithmic}[1]
		\STATE \textbf{Initialize:} $\alpha^{i}_{1}=1$ for $i \in [N]$ 
		\setcounter{ALC@line}{5}
		\STATE Set $i_{t} = N$
		\setcounter{ALC@line}{8}
		\STATE Set the weights $\alpha^{i}_{t+1} =1,~\forall i \in [N]$
		\STATE Set $\cost^{i}_{t}[l] = \potential^{N-i}_{t}(\cumv^{i-1}_{t} + \stdv_{l}),~ \forall l \in [k],~\forall i \in [N]$
		\setcounter{ALC@line}{12}
		\STATE No extra parameters to be updated
	\end{algorithmic}
	\caption{OnlineBMR Details}
	\label{alg:OnlineBMR}
\end{algorithm}

The following theorem holds either if weak learners are single-label learners or if the loss $L$ is convex.

\begin{theorem}{\bf(BMR, General Loss Bound)}
\label{thm:mistakeOptimal}
	For any $T$ and $N \ll \frac{1}{\delta}$, the final loss suffered by OnlineBMR satisfies the following inequality with probability $1 - N\delta$:
	\begin{equation}
		\label{eq:mistakeGeneralOptimal}
		\sum_{t=1}^{T}L^{\truey_{t}}(\predy_{t}) \leq \sum_{t=1}^{T}\potential^{N}_{t}(\textbf{0}) + S \sum_{i=1}^{N}w^{i*}.
	\end{equation}
\end{theorem}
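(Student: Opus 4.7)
The plan is to run an induction on the expert index that telescopes the potential function from $\potential^N_t(\zerov)$ down to $\potential^0_t(\cumv^N_t)=L^{\truey_t}(\predy_t)$, with each step paying a price of $S w^{i*}$ from the weak learning condition.

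First I would fix an expert index $i$ and show the single-step inequality
\[
\sum_{t=1}^{T}\potential^{N-i}_{t}(\cumv^{i}_{t}) \;\le\; \sum_{t=1}^{T}\potential^{N-i+1}_{t}(\cumv^{i-1}_{t}) + S\,w^{i*}.
\]
The left side is handled by noting $\cumv^i_t = \cumv^{i-1}_t + \weakpred^i_t$ and bounding
\[
\potential^{N-i}_{t}(\cumv^{i-1}_{t}+\weakpred^i_t) \;\le\; \sum_{l}\weakpred^i_t[l]\,\potential^{N-i}_{t}(\cumv^{i-1}_{t}+\stdv_l) \;=\; \cost^i_t\cdot \weakpred^i_t,
\]
where the inequality is Jensen (using convexity of $\potential^{N-i}_t$, inherited from $L$) when the loss is convex, and is an equality when $\weakpred^i_t$ is a single-label prediction (a standard basis vector). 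This is exactly the dichotomy mentioned before the theorem statement.

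Next I would apply OnlineWLC to $WL^i$ using the importance weights $w_t := \weightv^{i}[t]/w^{i*}\in [0,1]$ and the normalized costs $\normcost^i_t$ defined in (\ref{eq:normalizedCost}). Since $\weakpred^i_t$ and $\unifv^{\truey_t}_\gamma$ are both distributions over $[k]$, multiplying through by $w^{i*}$ cancels the additive shift $\min_r \cost^i_t[r]$ and turns the weak learning condition into
\[
\sum_{t=1}^{T}\cost^i_t\cdot \weakpred^i_t \;\le\; \sum_{t=1}^{T}\cost^i_t\cdot \unifv^{\truey_t}_\gamma + S\,w^{i*}.
\]
The right hand side is then identified with a potential at the next level, because by the defining recursion (\ref{eq:potential}),
\[
\cost^i_t\cdot \unifv^{\truey_t}_\gamma \;=\; \E_{l\sim \unifv^{\truey_t}_\gamma}\potential^{N-i}_{t}(\cumv^{i-1}_{t}+\stdv_l) \;=\; \potential^{N-i+1}_{t}(\cumv^{i-1}_{t}),
\]
which closes the single-step inequality. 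Before invoking this I also need to verify that $\cost^i_t$, before normalization, sits in $\reducedcostseor$ up to the affine rescaling; this uses the proper property of $L$ passed on to the potentials by induction, so that $\potential^{N-i}_t(\cumv^{i-1}_t+\stdv_l)$ is smaller for $l\in \truey_t$ than for $l\notin\truey_t$.

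Finally I would telescope the single-step inequality over $i=1,\dots,N$. The sum telescopes because $\potential^{N-i}_t(\cumv^i_t)$ at level $i$ matches $\potential^{N-(i+1)+1}_t(\cumv^{(i+1)-1}_t)$ at level $i+1$. Using $\cumv^0_t = \zerov$ and, at the final step, $i_t = N$ so $\predy_t = \cumv^N_t$ and $\potential^0_t(\cumv^N_t)=L^{\truey_t}(\predy_t)$, I obtain the claimed bound
\[
\sum_{t=1}^{T}L^{\truey_t}(\predy_t) \;\le\; \sum_{t=1}^{T}\potential^{N}_{t}(\zerov) + S\sum_{i=1}^{N}w^{i*}.
\]
The $1-N\delta$ probability comes from a union bound over the $N$ applications of OnlineWLC. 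The step I expect to be the most delicate is the normalization argument: verifying that the affine transformation sending $\cost^i_t$ to $\normcost^i_t$ with weights $w_t=\weightv^i[t]/w^{i*}$ makes both sides of OnlineWLC collapse cleanly to an inequality on the unnormalized costs, and that $w^{i*}$ is a legitimate constant to invoke even though it is only known in hindsight.
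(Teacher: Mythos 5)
Your proposal is correct and follows essentially the same route as the paper's own proof: the same potential-recursion identity $\cost^i_t\cdot\unifv^{\truey_t}_{\gamma}=\potential^{N-i+1}_t(\cumv^{i-1}_t)$, the same Jensen/single-label dichotomy for bounding $\potential^{N-i}_t(\cumv^i_t)$ by $\cost^i_t\cdot\weakpred^i_t$, the same normalization trick (the additive shift cancels because both $\weakpred^i_t$ and $\unifv^{\truey_t}_{\gamma}$ are distributions, and the scale is absorbed into the importance weights $\weightv^i[t]/w^{i*}$), and the same telescoping plus union bound. The only cosmetic difference is that you read the single-step inequality from the bottom up while the paper reads it from the top down.
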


\begin{proof}
From (\ref{eq:potential}) and (\ref{eq:optimalCost}), we can write 
\begin{align*}
	\potential^{N-i+1}_{t}(\cumv^{i-1}_{t}) 
	&= \E_{l \sim \unifv^{Y_{t}}_{\gamma}} \potential^{N-i}_{t}(\cumv^{i-1}_{t} + \stdv_{l}) \\
	&= \cost^{i}_{t} \cdot \unifv^{Y_{t}}_{\gamma} \\
	&= \cost^{i}_{t} \cdot (\unifv^{Y_{t}}_{\gamma} - \weakpred^{i}_{t}) +  \cost^{i}_{t} \cdot \weakpred^{i}_{t}\\
	&\geq \cost^{i}_{t} \cdot (\unifv^{Y_{t}}_{\gamma} - \weakpred^{i}_{t}) + \potential^{N-i}_{t}(\cumv^{i}_{t}),
\end{align*}
\noindent where the last inequality is in fact equality if weak learners are single-label learners, or holds by Jensen's inequality if the loss is convex (which implies the convexity of potentials). Also note that $\cumv^{i}_{t} = \cumv^{i-1}_{t}+\weakpred^{i}_{t}$. Since both $\unifv^{Y_{t}}_{\gamma}$ and $\weakpred^{i}_{t}$ have $\ell_{1}$ norm $1$, we can subtract common numbers from every entry of $\cost^{i}_{t}$ without changing the value of $\cost^{i}_{t} \cdot (\unifv^{Y_{t}}_{\gamma} - \weakpred^{i}_{t})$. This implies we can plug in $\weightv^{i}[t] \normcost^{i}_{t}$ at the place of $\cost^{i}_{t}$. Then we have
\begin{align*}
\begin{split}
	\potential^{N-i+1}_{t}&(\cumv^{i-1}_{t}) -\potential^{N-i}_{t}(\cumv^{i}_{t}) \\
	&\geq \weightv^{i}[t] \normcost^{i}_{t} \cdot \unifv^{Y_{t}}_{\gamma} - \weightv^{i}[t] \normcost^{i}_{t} \cdot \weakpred^{i}_{t}.
\end{split}
\end{align*}
\noindent By summing this over $t$, we have 
\begin{align}
\begin{split}
	\label{eq:optimalProof1}
	\sum_{t=1}^{T}&\potential^{N-i+1}_{t}(\cumv^{i-1}_{t}) - \sum_{t=1}^{T}\potential^{N-i}_{t}(\cumv^{i}_{t})\\
	&\geq \sum_{t=1}^{T}\weightv^{i}[t] \normcost^{i}_{t} \cdot \unifv^{Y_{t}}_{\gamma} - \sum_{t=1}^{T}\weightv^{i}[t] \normcost^{i}_{t} \cdot \weakpred^{i}_{t}.
\end{split}
\end{align}
\noindent OnlineWLC $(\delta, \gamma, S)$ provides, with probability $1-\delta$, 
\begin{align*}
	\sum_{t=1}^{T}\frac{\weightv^{i}[t]}{w^{i*}}\normcost_{t}\cdot \weakpred_{t} 
	&\leq \frac{1}{w^{i*}}\sum_{t=1}^{T}\weightv^{i}[t]\normcost_{t}\cdot\unifv^{\truey_{t}}_{\gamma}+S.
\end{align*}
\noindent Plugging this in (\ref{eq:optimalProof1}), we get
\begin{equation*}
	\sum_{t=1}^{T}\potential^{N-i+1}_{t}(\cumv^{i-1}_{t})-\sum_{t=1}^{T}\potential^{N-i}_{t}(\cumv^{i}_{t})  
	\geq -S w^{i*}.
\end{equation*}
\noindent Now summing this over $i$, we get with probability $1-N\delta$ (due to union bound), 
\begin{equation*}
	\sum_{t=1}^{T}\potential^{N}_{t}(\textbf{0}) + S \sum_{i=1}^{N}w^{i*} \geq \sum_{t=1}^{T}\potential^{0}_{t}(\cumv^{N}_{t}) = \sum_{t=1}^{T}L^{\truey_{t}}(\predy_{t}),
\end{equation*}
which completes the proof.
\end{proof}

Now we evaluate the efficiency of OnlineBMR by fixing a loss. Unfortunately, there is no canonical loss in MLR settings, but following \textit{rank loss} is a strong candidate (cf. \cite{cheng2010bayes} and \cite{gao2011consistency}):
\begin{align*}
\begin{split}
\label{eq:rankLoss}
	\rankLoss^{\truey}&(\cumv) := 
	w_{\truey}\sum_{l\in\truey}\sum_{r\notin\truey} \ind(\cumv[l] < \cumv[r])+\frac{1}{2}\ind(\cumv[l] = \cumv[r]),
\end{split}
\end{align*}
\noindent where $w_{Y}=\frac{1}{|\truey| \cdot |\truey^{c}|}$ is a normalization constant that ensures the loss lies in $[0, 1]$. Note that this loss is not convex. In case weak learners are in fact single-label learners, we can simply use rank loss to compute potentials, but in more general case, we may use the following \textit{hinge loss} to compute potentials:
\begin{equation*}
	\hingeLoss^{\truey}(\cumv) := 
	w_{\truey}\sum_{l\in\truey}\sum_{r\notin\truey} (1+\cumv[r]-\cumv[l])_{+},
\end{equation*}
\noindent where $(\cdot)_{+}:=\max(\cdot, 0)$. It is convex and always greater than rank loss, and thus Theorem \ref{thm:mistakeOptimal} can be used to bound rank loss. In Appendix \ref{section:specificBound}, we bound two terms in the RHS of (\ref{eq:mistakeGeneralOptimal}) when the potentials are built upon rank and hinge losses. Here we record the results. 
\begin{table}[h]
\caption{Upper Bounds for $\potential^{N}_{t}(\textbf{0})$ and $w^{i*}$} \label{table:specificBound}
\begin{center}
\setlength\tabcolsep{12pt}
\begin{tabular}{ccc}
\toprule
loss 			& $\potential^{N}_{t}(\textbf{0})$ 	& $w^{i*}$ \\
\midrule
rank loss		& $e^{-\frac{\gamma^{2}N}{2}}$	&$O(\frac{1}{\sqrt{N-i}} )$	\\
hinge loss		& $(N+1)e^{-\frac{\gamma^{2}N}{2}}$	&2		\\
\bottomrule
\end{tabular}
\end{center}
\end{table}

For the case that we use rank loss, we can check 
\begin{equation*}
	\sum_{i=1}^{N} w^{i*} \leq \sum_{i=1}^{N} O(\frac{1}{\sqrt{N-i}}) \leq O(\sqrt N). 
\end{equation*}
Combining these results with Theorem \ref{thm:mistakeOptimal}, we get the following corollary.

\begin{corollary}{\bf(BMR, Rank Loss Bound)} For any $T$ and $N \ll \frac{1}{\delta}$, OnlineBMR satisfies following rank loss bounds with probability $1-N\delta$. 

With single-label learners, we have 
	\begin{equation}
		\label{eq:mistakeOptimalA}
		\sum_{t=1}^{T}\rankLoss^{\truey_{t}}(\predy_{t}) \leq e^{-\frac{\gamma^{2}N}{2}}T + O(\sqrt N S),
	\end{equation}
and with general learners, we have
	\begin{equation}
		\label{eq:mistakeOptimalB}
		\sum_{t=1}^{T}\rankLoss^{\truey_{t}}(\predy_{t}) \leq (N+1)e^{-\frac{\gamma^{2}N}{2}}T + 2NS.
	\end{equation}
\end{corollary}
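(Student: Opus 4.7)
The plan is to invoke Theorem~\ref{thm:mistakeOptimal} in two different regimes, combined with the potential and weight bounds recorded in Table~\ref{table:specificBound}. In the single-label case, weak predictions $\weakpred^{i}_{t}$ are standard basis vectors, so the Jensen step in the proof of Theorem~\ref{thm:mistakeOptimal} holds with equality and we do not need the loss to be convex. Hence we may set $L = \rankLoss$ directly when building the potentials $\potential^{i}_{t}$ and apply Theorem~\ref{thm:mistakeOptimal} as stated.

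For bound (\ref{eq:mistakeOptimalA}), I would first substitute the first row of Table~\ref{table:specificBound}: $\potential^{N}_{t}(\zerov) \leq e^{-\gamma^{2}N/2}$ gives $\sum_{t=1}^{T}\potential^{N}_{t}(\zerov) \leq e^{-\gamma^{2}N/2}T$. Then use the estimate $w^{i*} = O(1/\sqrt{N-i})$ and the elementary calculation
\begin{equation*}
\sum_{i=1}^{N} \frac{1}{\sqrt{N-i}} \;=\; O(\sqrt{N})
\end{equation*}
(already noted in the text immediately before the corollary) to get $S\sum_{i}w^{i*} = O(\sqrt{N}\,S)$. Combining these two pieces inside Theorem~\ref{thm:mistakeOptimal} yields (\ref{eq:mistakeOptimalA}).

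For bound (\ref{eq:mistakeOptimalB}), where the weak learners may output arbitrary distributions over $[k]$, the Jensen step in the proof of Theorem~\ref{thm:mistakeOptimal} requires a convex loss, so $\rankLoss$ itself cannot be used to define the potentials. The fix is to apply Theorem~\ref{thm:mistakeOptimal} with $L = \hingeLoss$, which is convex. Using the second row of Table~\ref{table:specificBound}, $\potential^{N}_{t}(\zerov) \leq (N+1)e^{-\gamma^{2}N/2}$ and $w^{i*} \leq 2$, so Theorem~\ref{thm:mistakeOptimal} directly gives
\begin{equation*}
\sum_{t=1}^{T}\hingeLoss^{\truey_{t}}(\predy_{t}) \;\leq\; (N+1)e^{-\gamma^{2}N/2}T + 2NS
\end{equation*}
with probability $1 - N\delta$. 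Finally, the pointwise inequality $\rankLoss^{\truey}(\cumv) \leq \hingeLoss^{\truey}(\cumv)$ (true term-by-term since $\ind(\cumv[l]<\cumv[r]) + \tfrac{1}{2}\ind(\cumv[l]=\cumv[r]) \leq (1+\cumv[r]-\cumv[l])_{+}$) transfers the hinge bound to a rank-loss bound, giving (\ref{eq:mistakeOptimalB}).

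The only non-trivial work is the bookkeeping in Table~\ref{table:specificBound}, which the excerpt defers to the appendix, and the comparison $\rankLoss \leq \hingeLoss$. Everything else is a direct plug-in to Theorem~\ref{thm:mistakeOptimal}, so I expect the proof to be short: essentially two applications of the theorem plus one surrogate-loss inequality.
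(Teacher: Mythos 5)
Your proposal is correct and follows exactly the paper's route: apply Theorem~\ref{thm:mistakeOptimal} with rank-loss potentials in the single-label case (where the Jensen step is an equality), with hinge-loss potentials in the general case, plug in the bounds from Table~\ref{table:specificBound}, and use $\rankLoss^{\truey} \leq \hingeLoss^{\truey}$ to transfer the hinge bound back to rank loss. Nothing to add.
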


\begin{remark}
When we divide both sides by $T$, we find the average loss is asymptotically bounded by the first term. The second term determines the sample complexity. In both cases, the first term decreases exponentially as $N$ grows, which means the algorithm does not require too many learners to achieve a desired loss bound. 
\end{remark}

\paragraph{Matching Lower Bounds} From (\ref{eq:mistakeOptimalA}), we can deduce that to attain average loss less than $\epsilon$, OnlineBMR needs $\Omega(\frac{1}{\gamma^{2}}\ln \frac{1}{\epsilon})$ learners and $\tilde\Omega(\frac{S}{\epsilon\gamma})$ samples. A natural question is whether these numbers are optimal. In fact the following theorem constructs a circumstance that matches these bounds up to logarithmic factors. Throughout the proof, we consider $k$ as a fixed constant.

\begin{theorem}
\label{thm:optimality}
	For any $\gamma \in (0, \frac{1}{2k})$,  $\delta, \epsilon \in (0, 1)$, and $S \geq \frac{k\ln(\frac{1}{\delta})}{\gamma}$, there exists an adversary with a family of learners satisfying OnlineWLC $(\delta, \gamma, S)$ such that to achieve error rate less than $\epsilon$, any boosting algorithm requires at least $\Omega(\frac{1}{\gamma^{2}}\ln \frac{1}{\epsilon})$ learners and $\Omega (\frac{S}{\epsilon\gamma})$ samples.
\end{theorem}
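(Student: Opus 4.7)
The plan is to exhibit an explicit adversary together with a family of weak learners that jointly satisfy OnlineWLC $(\delta, \gamma, S)$, and then lower bound the cumulative loss of any booster on this instance by a quantity of order $T e^{-c\gamma^2 N} + NS$. Solving each summand against $\epsilon T$ yields the two stated bounds. I would specialize to single labels: the adversary picks $\samplex_t$ to be a dummy constant and draws $y_t$ uniformly from $[k]$, setting $\truey_t = \{y_t\}$. Each weak learner $\weaklearner^i$ runs in two phases whose timing is chosen by the adversary. During a warm-up window of length $S_0 = S - \Theta(\log(1/\delta)/\gamma)$, learner $i$ returns the single-label prediction $\stdv_{\argmax_l \cost^i_t[l]}$, adversarially maximizing its own cost; outside this window it outputs $\stdv_{y_t}$ with probability $\tfrac{3\gamma}{2}$ and a uniformly random basis vector otherwise, independently across rounds and learners. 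The constraint $\gamma < 1/(2k)$ ensures the biased distribution is a valid probability vector over $[k]$.

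The first nontrivial step is verifying OnlineWLC. On any good-phase round the expected cost $\E[\cost^i_t \cdot \weakpred^i_t]$ is strictly less than $\cost^i_t \cdot \unifv^{\truey_t}_\gamma$ by $\Omega(\gamma)$ per unit weight $w_t$, so the cumulative cost gap has negative drift $-\Omega(\gamma T_{\text{good}})$. Azuma--Hoeffding on the bounded martingale of deviations gives fluctuations of order $\sqrt{T_{\text{good}}\log(1/\delta)}$ with probability $1-\delta$; these are dominated by the drift once $T_{\text{good}} \gtrsim \log(1/\delta)/\gamma^2$, and are in any case at most $O(\log(1/\delta)/\gamma)$. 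Adding the deterministic warm-up excess of at most $S_0$ yields a total excess of at most $S_0 + O(\log(1/\delta)/\gamma) \le S$ under the hypothesis $S \ge k\log(1/\delta)/\gamma$.

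For the lower bound on $N$, condition on $y_t$. The good-phase predictions of the $N$ learners are then i.i.d.\ standard basis vectors with bias $3\gamma/2$ toward $y_t$, so $\cumv^N_t$ is a sum of $N$ i.i.d.\ multinomial vectors. A reverse Chernoff / Berry--Esseen estimate shows that with probability at least $\Omega(e^{-c\gamma^2 N})$ some wrong label $r \ne y_t$ has score at least $\cumv^N_t[y_t]$, forcing rank loss $\Omega(e^{-c\gamma^2 N})$ per round. Hence $N = \Omega(\gamma^{-2}\log(1/\epsilon))$ is necessary to reach error $\epsilon$. For the sample bound, stagger the $N$ warm-up windows so that $\Omega(NS)$ round-learner slots fall in warm-up; on any round where a constant fraction of learners are in warm-up the conditional rank loss is $\Omega(1)$, giving total loss $\Omega(NS)$. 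Combining $\epsilon T \ge \Omega(NS)$ with $N \ge \Omega(1/\gamma)$ (obtained by a simple Hoeffding argument on a single learner) yields $T = \Omega(S/(\epsilon\gamma))$, treating $k$ as a fixed constant per the paper's convention.

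The hardest part will be simultaneously calibrating $S_0$ and the good-phase bias. $S_0$ must be $\Theta(S)$ so the warm-up contribution dominates the booster's total loss, yet small enough that the martingale concentration during the good phase still fits within the remaining $S-S_0$ budget with probability $1-\delta$; this is precisely why the hypothesis $S \ge k\log(1/\delta)/\gamma$ is needed, as it leaves room for the $\Theta(\log(1/\delta)/\gamma)$ Azuma deviation on top of $S_0$. A secondary technicality is ensuring that the good-phase bias of $3\gamma/2$ still yields an anti-concentration lower bound with exponent $\Theta(\gamma^2 N)$ uniformly in the fixed constant $k$, which is a standard but somewhat delicate Berry--Esseen-style estimate.
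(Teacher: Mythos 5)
The first half of your construction (the lower bound on $N$) follows essentially the same route as the paper: i.i.d.\ single-label learners biased by $\Theta(\gamma)$ toward the truth, Azuma--Hoeffding to verify OnlineWLC, and a Berry--Esseen-type anti-concentration bound giving per-round loss $\Omega(e^{-c\gamma^2N})$. One point you gloss over there: the booster may combine the learners with \emph{arbitrary} (adaptive, even negative) weights, so you cannot simply declare $\cumv^N_t$ to be an unweighted sum of $N$ i.i.d.\ votes; you must first argue that uniform weights are the booster's best response (the paper does this by invoking the argument of Schapire and Freund, Section~13.2.6).

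The sample-complexity half has a genuine gap. Your warm-up learner plays $\stdv_{\argmax_l \cost^i_t[l]}$, which incurs excess up to $w_t\cdot 1$ per round over the baseline, so OnlineWLC forces each warm-up window to last only $O(S)$ rounds. From $O(S)$ uninformative rounds you can extract at most $\Omega(S)$ booster loss, hence only $T=\Omega(S/\epsilon)$ --- the factor $1/\gamma$ is lost. The staggering device does not recover it: if the windows are disjoint, only one of $N$ learners is bad on any round and a booster with adaptive weights can essentially ignore it, so the conditional rank loss on such rounds is not $\Omega(1)$; if the windows overlap enough that a constant fraction of learners is simultaneously in warm-up, the number of such rounds is still $O(NS)/\Omega(N)=O(S)$ because each individual window is $O(S)$ long. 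Moreover, the claim that a constant fraction of bad learners forces $\Omega(1)$ loss is unjustified when the remaining learners retain a positive edge. The missing idea --- and the one the paper uses --- is to make \emph{all} learners output the uniform distribution $\probv_t=\unifv^{\truey_t}_{0}$ (edge zero) during a common warm-up window: since $\cost\cdot(\unifv^{\truey}_{0}-\unifv^{\truey}_{\gamma})=O(\gamma)$, an edge-zero learner pays only $O(\gamma w_t)$ excess per round against the edge-$\gamma$ baseline, so the window can have length $T_0=\Theta(S/\gamma)$ while OnlineWLC still holds. On those $T_0$ rounds the predictions carry no information about $\truey_t$, so every booster suffers $\Omega(1)$ expected rank loss per round, giving total loss $\Omega(S/\gamma)$ and hence $T=\Omega(S/(\epsilon\gamma))$.
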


\begin{proof}
	We introduce a sketch here and postpone the complete discussion to Appendix \ref{section:proofs}. We assume that an adversary draws a label $\truey_{t}$ uniformly at random from $2^{[k]}-\{\emptyset, [k]\}$, and the weak learners generate single-label prediction $l_{t}$ w.r.t. $\probv_{t} \in \Delta [k]$. We manipulate $\probv_{t}$ such that weak learners satisfy OnlineWLC $(\delta, \gamma, S)$ but the best possible performance is close to (\ref{eq:mistakeOptimalA}). 
    
	Boundedness conditions in $\reducedcostseor$ and the Azuma-Hoeffding inequality provide that with probability $1-\delta$,     
\begin{equation*}
    	\sum_{t=1}^{T}w_{t}\cost_{t}[l_{t}]
	\leq \sum_{t=1}^{T}w_{t}\cost_{t} \cdot \probv_{t} + \frac{\gamma ||\weightv||_{1}}{k} + \frac{k\ln(\frac{1}{\delta})}{2\gamma}.
\end{equation*}
    For the optimality of the number of learners, we let $\probv_{t}=\unifv^{\truey_{t}}_{2\gamma}$ for all $t$. The above inequality guarantees OnlineWLC is met. Then a similar argument of \citet[Section 13.2.6]{schapire2012boosting} can show that the optimal choice of weights over the learners is $(\frac{1}{N}, \cdots, \frac{1}{N})$. Finally, adopting the argument in the proof of \citet[Theorem 4]{jung2017online}, we can show
\begin{equation*}
	\E \rankLoss^{\truey}(\predy_{t}) \geq \Omega(e^{-4Nk^{2}\gamma^{2}}).
\end{equation*}	
\noindent Setting this value equal to $\epsilon$, we have $N \geq \Omega(\frac{1}{\gamma^{2}}\ln \frac{1}{\epsilon})$, considering $k$ as a fixed constant. This proves the first part of the theorem.

For the second part, let $T_{0}:=\frac{S}{4\gamma}$ and define $\probv_{t} = \unifv^{\truey_{t}}_{0}$ for $t \leq T_{0}$ and $\probv_{t} = \unifv^{\truey_{t}}_{2\gamma}$ for $t > T_{0}$. Then OnlineWLC can be shown to be met in a similar fashion. Observing that weak learners do not provide meaningful information for $t \leq T_{0}$, we can claim any online boosting algorithm suffers a loss at least $\Omega(T_{0})$. Therefore to obtain the certain accuracy $\epsilon$, the number of instances $T$ should be at least $\Omega(\frac{T_{0}}{\epsilon}) = \Omega(\frac{S}{\epsilon \gamma})$, which completes the second part of the proof. 
\end{proof}

\subsection{Adaptive Algorithm}
\label{section:adaptive}
Despite the optimal loss bound, OnlineBMR has a few drawbacks when it is applied in practice. Firstly, potentials do not have a closed form, and their computation becomes a major bottleneck (cf. Table \ref{table:result}). Furthermore, the edge $\gamma$ becomes an extra tuning parameter, which increases the runtime even more. Finally, it is possible that learners have different edges, and assuming a constant edge can lead to inefficiency. To overcome these drawbacks, rather than assuming positive edges for weak learners, our second algorithm chooses the weight $\alpha^{i}_{t}$ adaptively to handle variable edges. 

\paragraph{Surrogate Loss} Like other adaptive boosting algorithms (e.g., \cite{beygelzimer2015optimal} and \cite{freund1999short}), our algorithm needs a surrogate loss. The choice of loss is broadly discussed by \cite{jung2017online}, and logistic loss seems to be a valid choice in online settings as its gradient is uniformly bounded. In this regard, we will use the following \textit{logistic loss}:
\begin{equation*}
\label{eq:logisticLoss}
	\logLoss^{\truey}(\cumv) := w_{\truey}\sum_{l \in \truey} \sum_{r \notin \truey} \log(1 + \exp(\cumv[r] - \cumv[l])).
\end{equation*}
\noindent It is proper and convex. We emphasize that booster's prediction suffers the rank loss, and this surrogate only plays an intermediate role in optimizing parameters. 

\paragraph{Algorithm Details} The algorithm is inspired by \citet[Adaboost.OLM]{jung2017online}, and we call it by Ada.OLMR\footnote{Online, Logistic, Multi-label, and Ranking}. Since it internally aims to minimize the logistic loss, we set the cost vector to be the gradient of the surrogate:
\begin{equation}
\label{eq:adaptiveCost}
	\cost^{i}_{t} := \nabla\logLoss^{\truey_{t}}(\cumv^{i-1}_{t}).
\end{equation}
Next we present how to set the weights $\alpha^{i}_{t}$. Essentially, Ada.OLMR wants to choose $\alpha^{i}_{t}$ to minimize the cumulative logistic loss: 
\begin{equation*}
\sum_{t} \logLoss^{\truey_{t}}(\cumv^{i-1}_{t} + \alpha^{i}_{t} \weakpred^{i}_{t}).
\end{equation*} 
After initializing $\alpha^{i}_{1}$ equals to $0$, we use \textit{online gradient descent} method, proposed by \citet{zinkevich2003online}, to compute the next weights. If we write $f^{i}_{t}(\alpha) := \logLoss^{\truey_{t}}(\cumv^{i-1}_{t}+\alpha \weakpred^{i}_{t})$, we want $\alpha^{i}_{t}$ to satisfy
\begin{equation*}
	\sum_{t} f^{i}_{t} (\alpha^{i}_{t}) \leq \min_{\alpha \in F} \sum_{t}f^{i}_{t}(\alpha) + R^{i}(T),
\end{equation*}
\noindent where $F$ is some \textit{feasible set}, and $R^{i}(T)$ is a sublinear regret. To apply the result by \citet[Theorem 1]{zinkevich2003online}, $f^{i}_{t}$ needs to be convex, and $F$ should be compact. The former condition is met by our choice of logistic loss, and we will use $F = [-2, 2]$ for the feasible set. Since the booster's loss is invariant under the scaling of weights, we can shrink the weights to fit in $F$. 

Taking derivative, we can check ${f^{i}_{t}}'(\alpha) \leq 1$. Now let $\Pi(\cdot)$ denote a projection onto $F$: $\Pi(\cdot) := \max \{-2, \min\{2, \cdot\}\}$. By setting 
\begin{equation*}
	\alpha^i_{t+1} = \Pi (\alpha^i_t - \eta_t {f^i_t}'(\alpha^i_t)) \text{ where } \eta_{t} = \frac{1}{\sqrt t},
\end{equation*}
we get $R^{i}(T) \leq 9 \sqrt T$. Considering that $\cumv^{i}_{t}=\cumv^{i-1}_{t}+\alpha^{i}_{t}\weakpred^{i}_{t}$, we can also write ${f^i_t}'(\alpha^i_t) = \cost^{i+1}_{t}\cdot \weakpred^{i}_{t}$. 

Finally, it remains to address how to choose $i_{t}$. In contrast to OnlineBMR, we cannot show that the last expert is reliably sophisticated. Instead, what can be shown is that at least one of the experts is good enough. Thus we use classical \textit{Hedge algorithm} (cf. \cite{freund1997decision} and \cite{littlestone1989weighted}) to randomly choose an expert at each iteration with adaptive probability distribution depending on each expert's prediction history. In particular, we introduce new variables $v^{i}_{t}$, which are initialized as $v^{i}_{1}=1,~\forall i$. At each iteration, $i_{t}$ is randomly drawn such that 
\begin{equation*}
	\prob(i_{t}=i) \propto v^{i}_{t},
\end{equation*}
\noindent and then $v^{i}_{t}$ is updated based on the expert's rank loss: 
\begin{equation*}
	v^{i}_{t+1}:= v^{i}_{t}e^{-\rankLoss^{\truey_{t}}(\cumv^{i}_{t})}.
\end{equation*}
The details are summarized in Algorithm \ref{alg:Ada.OLMR}.

\begin{algorithm}[h]
	\begin{algorithmic}[1]
		\STATE \textbf{Initialize:} $\alpha^{i}_{1} = 0 \text{ and } v^{i}_{1}=1,~ \forall i \in [N]$
		\setcounter{ALC@line}{5}
		\STATE Randomly draw $i_{t}$ s.t. $\prob(i_{t}=i) \propto v^{i}_{t}$ 
		\setcounter{ALC@line}{8}
		\STATE Compute $\alpha^i_{t+1} = \Pi (\alpha^i_t - \frac{1}{\sqrt t} {f^i_t}'(\alpha^i_t)),~ \forall i \in [N]$
		\STATE Compute $\cost^{i}_{t} = \nabla\logLoss^{\truey_{t}}(\cumv^{i-1}_{t}),~\forall i \in [N]$
		\setcounter{ALC@line}{12}
		\STATE Update $v^{i}_{t+1}= v^{i}_{t}e^{-\rankLoss^{\truey_{t}}(\cumv^{i}_{t})},~\forall i \in [N]$
	\end{algorithmic}
	\caption{Ada.OLMR Details}
	\label{alg:Ada.OLMR}
\end{algorithm}

\paragraph{Empirical Edges} As we are not imposing OnlineWLC, we need another measure of the learner's predictive power to prove the loss bound. From (\ref{eq:adaptiveCost}), it can be observed that the relevant labels have negative costs and the irrelevant ones have positive cost. Furthermore, the summation of entries of $\cost^{i}_{t}$ is exactly $0$. This observation suggests a new definition of weight:
\begin{align}
\begin{split}
\label{eq:adaptiveWeight}
	\weightv^{i}[t] 
	&:= w_{\truey_{t}}\sum_{l \in \truey_{t}} \sum_{r \notin \truey_{t}} \frac{1}{1 + \exp(\cumv^{i-1}_{t}[l] - \cumv^{i-1}_{t}[r])} \\
	&= -\sum_{l\in\truey_{t}}\cost^{i}_{t}[l] 
	= \sum_{r\notin\truey_{t}}\cost^{i}_{t}[r]
	= \frac{||\cost^{i}_{t}||_{1}}{2}.
\end{split}
\end{align}
\noindent This does not directly correspond to the weight used in (\ref{eq:normalizedCost}), but plays a similar role. Then we define the \textit{empirical edge}:
\begin{equation}
\label{eq:empiricalEdge}
	\gamma_{i} := -\frac{\sum_{t=1}^{T}\cost^{i}_{t}\cdot \weakpred^{i}_{t}}{||\weightv^{i}||_{1}}.
\end{equation}
\noindent The baseline learner $\unifv^{\truey_{t}}_{\gamma}$ has this value exactly $\gamma$, which suggests that it is a good proxy for the edge defined in Definition \ref{def:onlineWLC}. 

Now we present the loss bound of Ada.OLMR.

\begin{theorem}{\bf{(Ada.OLMR, Rank loss bound)}}
\label{thm:mistakeAdaptive}
For any $T$ and $N$, with probability $1-\delta$, the rank loss suffered by Ada.OLMR is bounded as follows:
\begin{equation}
\label{eq:mistakeAdaptive}
	\sum_{t=1}^{T} \rankLoss^{\truey_{t}}(\predy_{t}) 
	\leq \frac{8}{\sum_{i}|\gamma_{i}|}T + \tilde O (\frac{N^{2}}{\sum_{i}|\gamma_{i}|}),
\end{equation}
\noindent where $\tilde O$ notation suppresses dependence on $\log \frac{1}{\delta}$.
\end{theorem}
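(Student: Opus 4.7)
The proof combines four ingredients: a smoothness-based logistic descent relating $L_i := \sum_t\logLoss^{\truey_t}(\cumv^i_t)$ to $L_{i-1}$, Zinkevich's OGD regret guarantee for $\alpha^i_t$, the Hedge regret bound for the randomized expert choice $i_t$, and the pointwise comparison $\rankLoss \leq \logLoss/\log 2$. I will tackle these in order.

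First, I will derive a per-expert descent bound. Since $\log(1+e^x)$ has second derivative at most $\frac{1}{4}$, a univariate second-order Taylor bound applied pair-by-pair with increment $\alpha(\weakpred^i_t[r]-\weakpred^i_t[l])$---bounded by $|\alpha|$ since $\weakpred^i_t$ is a distribution---yields, after summing with weight $w_{\truey_t}$ over $(l,r)\in\truey_t\times\truey_t^c$ and then over $t$, the inequality
\[ \sum_t f^i_t(\alpha) \leq L_{i-1} - \alpha\gamma_i\|\weightv^i\|_1 + T\alpha^2/8 \quad \forall \alpha, \]
where I use $\sum_t\cost^i_t\cdot\weakpred^i_t = -\gamma_i\|\weightv^i\|_1$ from (\ref{eq:empiricalEdge}) together with $w_{\truey_t}\sum_{l,r}(\weakpred^i_t[r]-\weakpred^i_t[l])^2\leq 1$. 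Combining with Zinkevich's regret bound $\sum_t f^i_t(\alpha^i_t)\leq\min_{\alpha\in[-2,2]}\sum_t f^i_t(\alpha)+9\sqrt T$ gives the per-expert descent $L_i\leq L_{i-1}+\min_{\alpha\in[-2,2]}(-\alpha\gamma_i\|\weightv^i\|_1+T\alpha^2/8)+9\sqrt T$.

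Next, I will telescope the descent from $i=0$ to $N$, using $L_0=T\log 2$ (each pairwise term at zero equals $\log 2$) and $L_N\geq 0$. To extract $|\gamma_i|$, I exploit the pointwise inequality $\sigma(x)\geq\frac{1}{2}\ind(x\geq 0)$, which implies $\|\weightv^i\|_1\geq R_{i-1}/2$ where $R_i:=\sum_t\rankLoss^{\truey_t}(\cumv^i_t)$. Choosing the feasible $\alpha$ per expert (the unconstrained quadratic minimizer $4\gamma_i\|\weightv^i\|_1/T$ when it lies in $[-2,2]$, and the boundary $\pm 2$ otherwise), together with the AM-GM estimate $N\sqrt T\leq T+N^2$ to absorb the per-expert OGD overhead, yields a cumulative bound of the form $\sum_i|\gamma_i|R_{i-1}\leq O(T)+O(N^2)$. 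The weighted-minimum inequality $\min_i R_{i-1}\leq\sum_i|\gamma_i|R_{i-1}/\sum_i|\gamma_i|$ then gives $\min_i R_i\leq 8T/\sum_i|\gamma_i|+\tilde O(N^2/\sum_i|\gamma_i|)$.

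Finally, for the random index $i_t$, the multiplicative update $v^i_{t+1}=v^i_t e^{-\rankLoss^{\truey_t}(\cumv^i_t)}$ with $\rankLoss\in[0,1]$ together with the convexity inequality $e^{-x}\leq 1-(1-e^{-1})x$ for $x\in[0,1]$ yield, via the standard potential argument on $\log\sum_i v^i_t$, the bound $\sum_t\E_{i_t}[\rankLoss^{\truey_t}(\cumv^{i_t}_t)]\leq(1-e^{-1})^{-1}(\min_i R_i+\log N)$. An Azuma--Hoeffding step applied to the martingale $\rankLoss^{\truey_t}(\predy_t)-\E_{i_t}[\rankLoss^{\truey_t}(\cumv^{i_t}_t)]$, whose increments lie in $[-1,1]$, converts this into a high-probability statement at level $\delta$ with an added $O(\sqrt{T\log(1/\delta)})$ deviation absorbed into $\tilde O$. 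The main obstacle will be in the telescoping step: extracting $\sum_i|\gamma_i|$ rather than $\sum_i\gamma_i^2$ from the quadratic progress term requires isolating the $O(1)$ ``saturated'' experts with $|\gamma_i|\|\weightv^i\|_1>T/2$ and combining the interior-regime bound via Cauchy--Schwarz together with $\|\weightv^i\|_1\geq R_{i-1}/2$ to recover the linear-in-$|\gamma_i|$ dependence needed for the final denominator.
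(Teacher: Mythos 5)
Your overall architecture matches the paper's: OGD regret for the $\alpha^i_t$, a telescoping of the cumulative logistic loss across experts, the lower bound $\|\weightv^i\|_1\geq M_{i-1}/2$ coming from $\frac{1}{1+e^{a-b}}\geq\frac12\ind(a\leq b)$, and Hedge plus Azuma--Hoeffding for the randomized choice of $i_t$. The genuine gap is in the per-expert descent lemma. Your smoothness bound
\[
\textstyle\sum_t f^i_t(\alpha)\;\leq\;L_{i-1}-\alpha\gamma_i\|\weightv^i\|_1+\tfrac{T\alpha^2}{8}
\]
is correct but too weak: its minimum over $\alpha\in[-2,2]$ equals $-2\gamma_i^2\|\weightv^i\|_1^2/T$ whenever $|\gamma_i|\,\|\weightv^i\|_1\leq T/2$, and since $\weightv^i[t]\leq 1$ forces $\|\weightv^i\|_1\leq T$, every expert with $|\gamma_i|\leq\frac12$ sits in this ``interior'' regime --- the saturated case you propose to isolate is essentially empty in the small-edge setting that matters. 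The progress is then quadratic in $\gamma_i$, and the Cauchy--Schwarz patch goes the wrong way: from $\sum_i\gamma_i^2(\min_jM_j)^2/(2T)\leq O(T)+O(N\sqrt T)$ and $\sum_i|\gamma_i|\leq\sqrt{N\sum_i\gamma_i^2}$ you only obtain $\min_jM_j\leq O(\sqrt N\,T/\sum_i|\gamma_i|)$, a $\sqrt N$ loss against the claimed bound. Concretely, if all $\gamma_i=\gamma$ are small and $\|\weightv^i\|_1\approx M_{i-1}/2\approx m/2$, your telescoping constraint reads $N\gamma^2m^2/T\lesssim T$, i.e.\ $m\lesssim T/(\gamma\sqrt N)$ rather than $T/(\gamma N)$.

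The fix is to avoid the worst-case curvature $T/8$ and bound each pairwise increment multiplicatively: $\log(1+e^{s+\alpha b})-\log(1+e^{s})\leq\frac{e^{\alpha b}-1}{1+e^{-s}}$, so that, writing $\sum_ja_j=1$ and $\sum_ja_jb_j=\gamma_i$ as in the paper, Jensen's inequality gives $\sum_t\bigl[f^i_t(\alpha)-f^i_t(0)\bigr]\leq\|\weightv^i\|_1\sum_ja_j(e^{-\alpha b_j}-1)\leq\|\weightv^i\|_1(e^{-\alpha\gamma_i}-1)$. Evaluating at $\alpha=2\,\mathrm{sign}(\gamma_i)$ yields progress at most $\|\weightv^i\|_1(e^{-2|\gamma_i|}-1)\leq-\frac{|\gamma_i|}{2}\|\weightv^i\|_1\leq-\frac{|\gamma_i|}{4}M_{i-1}$, which is linear in $|\gamma_i|$ uniformly over $|\gamma_i|\in[0,1]$. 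With this replacement the telescoping gives $\min_iM_i\leq\frac{4\log 2}{\sum_i|\gamma_i|}T+\frac{36N\sqrt T}{\sum_i|\gamma_i|}$, and the remainder of your argument (Hedge regret, Azuma--Hoeffding, and AM--GM to absorb $N\sqrt T$ into $N^2+T$) goes through as written.
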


\begin{proof}
We start the proof by defining the rank loss suffered by expert $i$ as below:
\begin{equation*}
	M_{i}:= \sum_{t=1}^{T}\rankLoss^{\truey_{t}}(\cumv^{i}_{t}). 
\end{equation*}
\noindent According to the formula, there is no harm to define $M_{0} = \frac{T}{2}$ since $\cumv^{0}_{t} = \textbf{0}$. As the booster chooses an expert through the Hedge algorithm, a standard analysis (cf. \cite[Corollary 2.3]{cesa2006prediction}) along with the Azuma-Hoeffding inequality provides with probability $1-\delta$, 
\begin{equation}
\label{eq:adaptiveProof1}
	\sum_{t=1}^{T} \rankLoss^{\truey_{t}}(\predy_{t}) \leq 2 \min_{i} M_{i} + 2 \log N + \tilde O (\sqrt T),
\end{equation}
\noindent where $\tilde O$ notation suppresses dependence on $\log \frac{1}{\delta}$.

It is not hard to check that $\frac{1}{1+\exp(a-b)} \geq \frac{1}{2}\ind(a\leq b)$, from which we can infer 
\begin{equation}
\label{eq:adaptiveProof8}
	\weightv^{i}[t] \geq \frac{1}{2}\rankLoss^{\truey_{t}}(\cumv^{i-1}_{t}) \text{ and }
	||\weightv^{i}||_{1} \geq \frac{M_{i-1}}{2},
\end{equation}
\noindent where $\weightv^{i}$ is defined in (\ref{eq:adaptiveWeight}). Note that this relation holds for the case $i=1$ as well. 

Now let $\Delta_{i}$ denote the difference of the cumulative logistic loss between two consecutive experts: 
\begin{align*}
	\Delta_{i} 
	&:= \sum_{t=1}^{T} \logLoss^{\truey_{t}}(\cumv^{i}_{t}) - \logLoss^{\truey_{t}}(\cumv^{i-1}_{t}) \\
	&= \sum_{t=1}^{T} \logLoss^{\truey_{t}}(\cumv^{i-1}_{t} + \alpha^{i}_{t}\weakpred^{i}_{t}) - \logLoss^{\truey_{t}}(\cumv^{i-1}_{t}). 
\end{align*}
Then the online gradient descent algorithm provides 
\begin{align}
\begin{split}
\label{eq:adaptiveProof2}
	\Delta_{i} 
	\leq \min_{\alpha \in [-2, 2]} \sum_{t=1}^{T} [\logLoss^{\truey_{t}}(\cumv^{i-1}_{t} + \alpha&\weakpred^{i}_{t}) - \logLoss^{\truey_{t}}(\cumv^{i-1}_{t})]  \\
	&+ 9 \sqrt T. 
\end{split}
\end{align}
\noindent Here we record an univariate inequality:
\begin{align*}
	\log(1+e^{s+\alpha}) - \log(1+e^s) 
	&= \log(1 + \frac{e^{\alpha}-1}{1+ e^{-s}}) \\
	&\leq \frac{1}{1 + e^{-s}} (e^\alpha - 1).
\end{align*}
\noindent We expand the difference to get 
\begin{align}
\begin{split}
\label{eq:adaptiveProof4}
	&\sum_{t=1}^{T}[\logLoss^{\truey_{t}}(\cumv^{i-1}_{t} + \alpha\weakpred^{i}_{t}) - \logLoss^{\truey_{t}}(\cumv^{i-1}_{t})]\\
	&= \sum_{t=1}^{T}\sum_{l\in\truey_{t}}\sum_{r\notin\truey_{t}}\log\frac{1+e^{\cumv^{i-1}_{t}[r] - \cumv^{i-1}_{t}[l] + \alpha(\weakpred^{i}_{t}[r] - \weakpred^{i}_{t}[l])}}{1+e^{\cumv^{i-1}_{t}[r] - \cumv^{i-1}_{t}[l]}}\\
	&\leq \sum_{t=1}^{T}\sum_{l\in\truey_{t}}\sum_{r\notin\truey_{t}}\frac{1}{1+e^{\cumv^{i-1}_{t}[l] - \cumv^{i-1}_{t}[r]}}(e^{\alpha(\weakpred^{i}_{t}[r] - \weakpred^{i}_{t}[l])}-1) \\
	&=: f(\alpha).
\end{split}
\end{align}
\noindent We claim that $\min_{\alpha \in [-2, 2]}f(\alpha) \leq -\frac{|\gamma_{i}|}{2}||\weightv^{i}||_{1}$. Let us rewrite $||\weightv^{i}||_{1}$ in (\ref{eq:adaptiveWeight}) and $\gamma_{i}$ in (\ref{eq:empiricalEdge}) as following. 
\begin{align}
\begin{split}
\label{eq:adaptiveProof5}
	||\weightv^{i}||_{1}
	&=\sum_{t=1}^{T}\sum_{l\in\truey_{t}}\sum_{r\notin\truey_{t}}\frac{1}{1+e^{\cumv^{i-1}_{t}[l] - \cumv^{i-1}_{t}[r]}}\\
	\gamma_{i} 
	&= \sum_{t=1}^{T}\sum_{l\in\truey_{t}}\sum_{r\notin\truey_{t}}\frac{1}{||\weightv^{i}||_{1}}\frac{\weakpred^{i}_{t}[l] - \weakpred^{i}_{t}[r]}{1+e^{\cumv^{i-1}_{t}[l] - \cumv^{i-1}_{t}[r]}}. 
\end{split}
\end{align}
\noindent For the ease of notation, let $j$ denote an index that moves through all tuples of $(t, l, r) \in [T] \times \truey_{t} \times \truey_{t}^{c}$, and $a_{j}$ and $b_{j}$ denote following terms. 
\begin{align*}
	a_{j} &= \frac{1}{||\weightv^{i}||_{1}}\frac{1}{1+e^{\cumv^{i-1}_{t}[l] - \cumv^{i-1}_{t}[r]}} \\
	b_{j} &= \weakpred^{i}_{t}[l] - \weakpred^{i}_{t}[r].
\end{align*}
\noindent Then from (\ref{eq:adaptiveProof5}), we have $\sum_{j}a_{j} = 1$ and $\sum_{j}a_{j}b_{j} = \gamma_{i}$. Now we express $f(\alpha)$ in terms of $a_{j}$ and $b_{j}$ as below. 
\begin{equation*}
	\frac{f(\alpha)}{||\weightv^{i}||_{1}}
	= \sum_{j}a_{j}(e^{-\alpha b_{j}}-1) 
	\leq e^{-\alpha\sum_{j}a_{j}b_{j}}-1 
	= e^{-\alpha \gamma_{i}}-1,
\end{equation*}
\noindent where the inequality holds by Jensen's inequality. From this, we can deduce that 
\begin{equation*}
\min_{\alpha\in [-2, 2]}\frac{f(\alpha)}{||\weightv^{i}||_{1}} \leq e^{-2|\gamma_{i}|}-1 \leq -\frac{|\gamma_{i}|}{2},
\end{equation*}
\noindent where the last inequality can be checked by investigating $|\gamma_{i}| = 0, 1$ and observing the convexity of the exponential function. This proves our claim that 
\begin{equation}
\label{eq:adaptiveProof7}
	\min_{\alpha \in [-2, 2]}f(\alpha) \leq -\frac{|\gamma_{i}|}{2}||\weightv^{i}||_{1}.
\end{equation} 
\noindent Combining (\ref{eq:adaptiveProof8}), (\ref{eq:adaptiveProof2}), (\ref{eq:adaptiveProof4}) and (\ref{eq:adaptiveProof7}), we have
\begin{equation*}
\Delta_{i} \leq -\frac{|\gamma_{i}|}{4} M_{i-1} + 9 \sqrt T.
\end{equation*}
\noindent Summing over $i$, we get by telescoping rule
\begin{align*}
	\sum_{t=1}^{T} \logLoss^{\truey_{t}}&(\cumv^{N}_{t}) - \sum_{t=1}^{T} \logLoss^{\truey_{t}}(\zerov) \\
	&\leq -\frac{1}{4}\sum_{i=1}^{N} |\gamma_{i}| M_{i-1} + 9N\sqrt T \\
	&\leq -\frac{1}{4}\sum_{i=1}^{N} |\gamma_{i}| \min_{i} M_{i} + 9N \sqrt T.
\end{align*}
\noindent Note that $\logLoss^{\truey_{t}}(\zerov) = \log 2$ and $\logLoss^{\truey_{t}}(\cumv^{N}_{t}) \geq 0$. Therefore we have
\begin{equation*}
	\min_{i}M_{i} \leq  \frac{4\log 2}{\sum_{i}|\gamma_{i}|}T + \frac{36N \sqrt T}{\sum_{i}|\gamma_{i}|}.
\end{equation*}
\noindent Plugging this in (\ref{eq:adaptiveProof1}), we get with probability $1-\delta$,
\begin{align*}
	\sum_{t=1}^{T} \rankLoss^{\truey_{t}}(\predy_{t})
	&\leq \frac{8 \log 2}{\sum_{i}|\gamma_{i}|}T + \tilde O (\frac{N \sqrt T}{\sum_{i}|\gamma_{i}|} + \log N) \\
	&\leq \frac{8}{\sum_{i}|\gamma_{i}|}T + \tilde O (\frac{N^{2}}{\sum_{i}|\gamma_{i}|}),
\end{align*}
where the last inequality holds from AM-GM inequality: $cN\sqrt T \leq \frac{c^{2}N^{2} + T}{2}$. This completes our proof.
\end{proof}

\paragraph{Comparison with OnlineBMR}
We finish this section by comparing our two algorithms. For a fair comparison, assume that all learners have edge $\gamma$. Since the baseline learner $\unifv^{\truey}_{\gamma}$ has empirical edge $\gamma$, for sufficiently large $T$, we can deduce that $\gamma_{i}\geq \gamma$ with high probability. Using this relation, (\ref{eq:mistakeAdaptive}) can be written as
\begin{equation*}
	\sum_{t=1}^{T} \rankLoss^{\truey_{t}}(\predy_{t}) 
	\leq \frac{8}{N\gamma}T + \tilde O (\frac{N}{\gamma}).
\end{equation*}
Comparing this to either (\ref{eq:mistakeOptimalA}) or (\ref{eq:mistakeOptimalB}), we can see that OnlineBMR indeed has better asymptotic loss bound and sample complexity. Despite this sub-optimality (in upper bounds), Ada.OLMR shows comparable results in real data sets due to its adaptive nature.

\section{EXPERIMENTS}
\label{section:experiments}
We performed an experiment on benchmark data sets taken from MULAN\footnote{\cite{mulan}, \url{http://mulan.sourceforge.net/datasets.html}}. We chose these four particular data sets because \cite{dembczynski2012consistent} already provided performances of batch setting boosting algorithms, giving us a benchmark to compare with. The authors in fact used five data sets, but \textit{image} data set is no longer available from the source. Table \ref{table:data} summarizes the basic statistics of data sets, including training and test set sizes, number of features and labels, and three statistics of the sizes of relevant sets. The data set \textit{m-reduced} is a reduced version of \textit{mediamill} obtained by random sampling without replacement. We keep the original split for training and test sets to provide more relevant comparisons.

\begin{table}[h]
\caption{Summary of Data Sets} \label{table:data}
\begin{center}
\setlength\tabcolsep{2pt}
\begin{tabular}{lrrrrrrr}
\toprule
data & \#train & \#test & dim & $k$ & min &mean & max \\
\midrule 
emotions 		&391		&202		&72		&6		&1		&1.87		&3\\
scene 		&1211	&1196	&294		&6		&1		&1.07		&3\\
yeast 		&1500	&917		&103		&14		&1		&4.24		&11\\
mediamill 		&30993	&12914	&120		&101		&0		&4.38		&18\\
m-reduced 	&1500	&500		&120		&101		&0		&4.39		&13\\
\bottomrule
\end{tabular}
\end{center}
\end{table}

VFDT algorithms presented by \cite{domingos2000mining} were used as weak learners. Every algorithm used $100$ trees whose parameters were randomly chosen. VFDT is trained using single-label data, and we fed individual relevant labels along with importance weights that were computed as $\max_{l}\cost_{t}-\cost_{t}[l]$. Instead of using all covariates, the booster fed to trees randomly chosen $20$ covariates to make weak predictions less correlated. 

All computations were carried out on a Nehalem architecture 10-core 2.27 GHz Intel Xeon E7-4860 processors with 25 GB RAM per core. Each algorithm was trained at least ten times\footnote{OnlineBMR for \textit{m-reduced} was tested 10 times due to long runtimes, and others were tested 20 times} with different random seeds, and the results were aggregated through mean. Predictions were evaluated by rank loss. The algorithm's loss was only recorded for test sets, but it kept updating its parameters while exploring test sets as well. 

Since VFDT outputs a conditional distribution, which is not of a single-label format, we used hinge loss to compute potentials. Furthermore, OnlineBMR has an additional parameter of edge $\gamma$. We tried four different values\footnote{$\{.2, .1, .01, .001\}$ for small $k$ and $\{.05, .01, .005, .001\}$ for large $k$}, and the best result is recorded as \textit{best BMR}. Table \ref{table:result} summarizes the results. 

\begin{table}[h]
\caption{Average Loss and Runtime in seconds} \label{table:result}
\begin{center}
\setlength\tabcolsep{3pt}
\begin{tabular}{lrrrrrrrr}
\toprule
data	&&batch\footnote{The best result from batch boosting algorithms in \cite{dembczynski2012consistent}}	&&\multicolumn{2}{c}{Ada.OLMR}&&	\multicolumn{2}{c}{best BMR}\\
\midrule
emotions		&&.1699	&&.1600	&253		&&.1654	&611		\\
scene		&&.0720	&&.0881	&341		&&.0743	&1488	\\
yeast		&&.1820	&&.1874	&2675	&&.1836	&9170	\\
mediamill		&&.0665	&&.0508	&69565	&&-		&-		\\
m-reduced	&&-		&&.0632	&4148	&&.0630	&288204	\\
\bottomrule
\end{tabular}
\end{center}
\end{table}

Two algorithms' average losses are comparable to each other and to batch setting results, but OnlineBMR requires much longer runtimes. Based on the fact that best BMR's performance is reported on the best edge parameter out of four trials, Ada.OLMR is far more favorable in practice. With large number of labels, runtime for OnlineBMR grows rapidly, and it was even impossible to run \textit{mediamill} data within a week, and this was why we produced the reduced version. The main bottleneck is the computation of potentials as they do not have closed form.

\section{CONCLUSION}
In this paper, we presented two online boosting algorithms that make multi-label ranking (MLR) predictions. The algorithms are quite flexible in their choice of weak learners in that various types of learners can be combined to produce a strong learner. OnlineBMR is built upon the assumption that all weak learners are strictly better than random guessing, and its loss bound is shown to be tight under certain conditions. Ada.OLMR adaptively chooses the weights over the learners so that learners with arbitrary (even negative) edges can be boosted. Despite its suboptimal loss bound, it produces comparable results with OnlineBMR and runs much faster. 

Online MLR boosting provides several opportunities for further research. A major issue in MLR problems is that there does not exist a canonical loss. Fortunately, Theorem \ref{thm:mistakeOptimal} holds for any proper loss, but Ada.OLMR only has a rank loss bound. An adaptive algorithm that can handle more general losses will be desirable. The existence of an \emph{optimal} adaptive algorithm is another interesting open question. 

\subsubsection*{Acknowledgments}
We acknowledge the support of NSF via CAREER grant IIS-1452099 and CIF-1422157 and of the Sloan Foundation via a Sloan Research Fellowship.

\bibliography{./tex/myref} 
\clearpage
\newpage

\titleformat{\section}[hang]
{\large\bfseries}
{\appendixname~\thesection.}{0.5em}{}

\begin{appendices}

\section{SPECIFIC BOUNDS FOR OnlineBMR}
\label{section:specificBound}

We begin this section by introducing a \textit{random walk framework} to compute potentials. Suppose $\Xv^{i}:=(X_{1}, \cdots, X_{k})$ is a random vector that tracks the number of draws of each label among $i$ i.i.d. random draws w.r.t. $\unifv^{\truey_{t}}_{\gamma}$. Then according to (\ref{eq:potential}), we may write
\begin{equation*}
	\potential^{i}_{t}(\cumv) = \E L^{\truey_{t}}(\cumv + \Xv).
\end{equation*}
This framework will appear frequently throughout the proofs. We start from rank loss.

\begin{lemma} 
\label{lemma:asymptoticErrorA}
Under the same setting as in Theorem \ref{thm:mistakeOptimal} but with potentials built upon rank loss, we may bound $\potential^{N}_{t}(\textbf{0})$ as following:
\begin{equation*}
	\potential^{N}_{t}(\textbf{0}) \leq e^{-\frac{\gamma^{2}N}{2}} .
\end{equation*}
\end{lemma}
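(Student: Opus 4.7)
The plan is to use the random walk framework introduced at the start of the appendix together with Hoeffding's inequality applied pair-by-pair to the relevant-vs-irrelevant label counts. Writing $\Xv^{N} = (X_{1}, \dots, X_{k})$ for the vector of counts of $N$ i.i.d.\ draws from $\unifv^{\truey_{t}}_{\gamma}$, the framework gives
\begin{equation*}
\potential^{N}_{t}(\textbf{0}) = \E\,\rankLoss^{\truey_{t}}(\Xv^{N}).
\end{equation*}
By linearity of expectation and the definition of rank loss, this equals
\begin{equation*}
w_{\truey_{t}}\sum_{l\in\truey_{t}}\sum_{r\notin\truey_{t}}\bigl[\prob(X_{l}<X_{r}) + \tfrac{1}{2}\prob(X_{l}=X_{r})\bigr],
\end{equation*}
so it suffices to bound each bracketed term by $e^{-\gamma^{2}N/2}$, since $w_{\truey_{t}}|\truey_{t}||\truey_{t}^{c}| = 1$.

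For a fixed pair $(l, r)$ with $l \in \truey_{t}$, $r \notin \truey_{t}$, I would write $X_{l} - X_{r} = \sum_{i=1}^{N} Z_{i}$, where $Z_{i} \in \{-1, 0, +1\}$ records whether the $i$-th draw lands on $l$, on $r$, or elsewhere. By the definition of $\unifv^{\truey_{t}}_{\gamma}$, the $Z_{i}$ are i.i.d.\ with $\prob(Z_{i}=1)=a+\gamma$, $\prob(Z_{i}=-1)=a$, hence $\E Z_{i} = \gamma$. The key step is then to observe $\prob(X_{l} < X_{r}) + \tfrac{1}{2}\prob(X_{l}=X_{r}) \leq \prob(X_{l} \leq X_{r}) = \prob(\sum_{i}Z_{i} \leq 0)$, and apply Hoeffding's inequality to the bounded i.i.d.\ variables $Z_{i} \in [-1, 1]$:
\begin{equation*}
\prob\Bigl(\sum_{i=1}^{N}(Z_{i}-\gamma) \leq -N\gamma\Bigr) \leq \exp\Bigl(-\tfrac{2(N\gamma)^{2}}{N\cdot 2^{2}}\Bigr) = e^{-\gamma^{2}N/2}.
\end{equation*}
Summing over all $|\truey_{t}|\cdot|\truey_{t}^{c}|$ pairs and using $w_{\truey_{t}} = 1/(|\truey_{t}|\cdot|\truey_{t}^{c}|)$ then gives the claimed bound.

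I do not expect a serious obstacle here: the two small points to verify are the random-walk representation of $\potential^{N}_{t}$ (which is immediate by induction from \eqref{eq:potential}) and the Hoeffding constants, where one must remember that the range of $Z_{i}$ is $2$, not $1$. The bound is also sharp up to constants in the exponent, as it should be for a boost-by-majority-type potential.
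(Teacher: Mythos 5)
Your proposal is correct and follows essentially the same route as the paper's proof: represent $\potential^{N}_{t}(\textbf{0})$ via the random walk $\Xv^{N}$, bound each pair's contribution $\prob(X_{l}<X_{r})+\tfrac12\prob(X_{l}=X_{r})$ by $\prob(X_{l}\leq X_{r})$, reduce to a sum of i.i.d.\ $\{-1,0,1\}$ increments with mean $\gamma$, and apply Hoeffding with range $2$. The constants and the handling of the tie term match the paper's argument.
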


\begin{proof}
For simplicity, we drop $t$ in the proof. Let $\Xv^{N}$ be the aforementioned random vector. Then we may write the potential by
\begin{align*}
	\potential^{N}(\textbf{0}) 
	&= \E \rankLoss^{\truey}(\Xv^{N}) \\
	&\leq w_{\truey}\sum_{l\in\truey}\sum_{r\notin\truey} \E\ind(X_{r} \geq X_{l}) \\
	&= w_{\truey}\sum_{l\in\truey}\sum_{r\notin\truey}\prob(X_{r}-X_{l} \geq 0). \\
\end{align*}
\noindent Fix $l\in\truey$ and $r \notin \truey$. By definition of $\unifv^{\truey}_{\gamma}$, we have 
\begin{equation*}
	a:=\unifv^{\truey}_{\gamma}[l] = \unifv^{\truey}_{\gamma}[r] + \gamma=:b.
\end{equation*}
Now suppose we draw $1$ with probability $a$, $-1$ with probability $b$, and $0$ otherwise. Then $\prob(X_{r}-X_{l} \geq 0)$ equals the probability that the summation of $N$ i.i.d. random numbers is non-negative. Then we can apply the Hoeffding's inequality to get 
\begin{equation*}
	\prob(X_{r}-X_{l} \geq 0) \leq e^{-\frac{\gamma^{2}N}{2}}. 
\end{equation*}
\noindent Since $w_{\truey}$ is the inverse of the number of pairs $(l, r)$, this proves our assertion. 
\end{proof}

\begin{lemma}
\label{lemma:weightA}

Under the same setting as in Theorem \ref{thm:mistakeOptimal} but with potentials built upon rank loss, we can show that $\forall i,~w^{i*} \leq O(\frac{1}{\sqrt{N-i}})$. 
\end{lemma}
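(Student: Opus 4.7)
The plan is to establish the bound via the random-walk framework introduced at the top of this appendix. Recall that $\potential^{N-i}_t(\mathbf{z}) = \E\,\rankLoss^{\truey_t}(\mathbf{z} + \Xv^{N-i})$, where $\Xv^{N-i}$ is the count vector produced by $N-i$ i.i.d.\ draws from $\unifv^{\truey_t}_\gamma$. In particular, $\cost^i_t[l] = \E\,\rankLoss^{\truey_t}(\cumv^{i-1}_t + \stdv_l + \Xv^{N-i})$. Since $\weightv^i[t] = \max_l \cost^i_t[l] - \min_l \cost^i_t[l]$, it suffices to control $|\cost^i_t[l] - \cost^i_t[l']|$ uniformly in $l, l'$ and then take a maximum over $t$.

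The key observation is that incrementing one coordinate of the score vector by $1$ only affects rank-loss pairs that involve that coordinate, and each affected pair's indicator changes on an event whose probability is $O(1/\sqrt{N-i})$ by random-walk anti-concentration. Concretely, writing $\mathbf{z} = \cumv^{i-1}_t + \Xv^{N-i}$ and fixing $l_0 \in \truey_t$, the only pairs whose contribution to $\rankLoss^{\truey_t}$ differs between $\mathbf{z} + \stdv_{l_0}$ and $\mathbf{z}$ are those of the form $(l_0, r)$ with $r \notin \truey_t$. For each such pair, the (possibly tied) indicator transitions only when $X_r - X_{l_0}$ takes one of at most two specific integer values determined by the fixed real number $\cumv^{i-1}_t[r] - \cumv^{i-1}_t[l_0]$. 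The symmetric statement holds when $l_0 \notin \truey_t$, with pairs $(l'', l_0)$ for $l'' \in \truey_t$.

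The next step is to bound $\prob(X_r - X_{l_0} = m)$ for any integer $m$. Under one draw from $\unifv^{\truey_t}_\gamma$, the increment of $X_r - X_{l_0}$ takes values in $\{-1, 0, +1\}$, giving a lattice random walk whose per-step variance is bounded below by a positive constant that depends only on $\gamma$ and $k$ (both treated as fixed). A standard local central limit theorem---or any Littlewood--Offord style point-mass bound for sums of i.i.d.\ lattice variables---then yields $\max_m \prob(X_r - X_{l_0} = m) \leq C/\sqrt{N-i}$ for some constant $C = C(\gamma, k)$.

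Combining these ingredients: at most $\max(|\truey_t|, |\truey_t^c|)$ pairs are affected by a single-coordinate shift, each with expected indicator change $O(1/\sqrt{N-i})$, and the rank-loss normalization $w_{\truey_t} = 1/(|\truey_t|\,|\truey_t^c|)$ then yields
\[
|\cost^i_t[l] - \cost^i_t[l']| \leq w_{\truey_t} \cdot O\bigl(\max(|\truey_t|, |\truey_t^c|)\bigr) \cdot O(1/\sqrt{N-i}) \leq O(1/\sqrt{N-i}),
\]
where we used a triangle inequality through $\potential^{N-i}_t(\cumv^{i-1}_t)$ to reduce to two single-coordinate shifts. The main obstacle is verifying the anti-concentration bound cleanly: one must confirm the step distribution has non-degenerate variance so the local CLT actually delivers the $1/\sqrt{N-i}$ rate, and separately handle the small-$(N-i)$ regime, where the trivial bound $w^{i*} \leq 1$ (from $\cost^i_t \in [0,1]^k$) can be absorbed into the big-$O$ constant.
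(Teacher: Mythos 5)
Your proposal is correct and follows essentially the same route as the paper: both reduce $w^{i*}$ to the anti-concentration bound $\max_m \prob(X_r - X_l = m) \leq O(1/\sqrt{N-i})$ for the lattice random walk induced by $\unifv^{\truey_t}_{\gamma}$ (the paper obtains this from a Berry--Esseen-based lemma of \citet{jung2017online}, which plays the role of your local CLT / Littlewood--Offord step). The only difference is bookkeeping: you pass through $\potential^{N-i}_t(\cumv^{i-1}_t)$ with two single-coordinate shifts and a triangle inequality, whereas the paper directly bounds $\cost^i_t[r^*] - \cost^i_t[l^*]$ by observing that each pair's indicator can change only when $X_r - X_l$ lands in a window of at most three lattice points.
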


\begin{proof}
First we fix $t$ and $i$. We also fix $l^{*}\in \truey_{t}$ and $r^{*}\in \truey_{t}^{c}$. Then write $\cumv_{1}:= \cumv^{i-1}_{t}+\stdv_{l^{*}}$ and $\cumv_{2}:= \cumv^{i-1}_{t}+\stdv_{r^{*}}$. Again we introduce $\Xv^{N-i}$. Then we may write
\begin{align*}
	\cost^{i}_{t}[r^{*}] - \cost^{i}_{t}[l^{*}]
	&=\potential^{N-i}_{t}(\cumv_{2}	) - \potential^{N-i}_{t}(\cumv_{1})\\
	&=\E[\rankLoss^{\truey_{t}}(\cumv_{2}+\Xv^{N-i}) - \rankLoss^{\truey_{t}}(\cumv_{1}+\Xv^{N-i})]\\
	&\leq w_{\truey_{t}}\sum_{l\in\truey_{t}}\sum_{r\notin\truey_{t}}f(r, l),
\end{align*}
\noindent where 
\begin{align*}
	f(r,l) := \E[\ind&(\cumv_{2}[r] + X_{r} \geq \cumv_{2}[l] + X_{l}) \\
	&-\ind(\cumv_{1}[r] + X_{r} > \cumv_{1}[l] + X_{l})].
\end{align*}
\noindent Here we intentionally include and exclude equality for the ease of computation. Changing the order of terms, we can derive
\begin{align*}
	f(r,l) 
	&\leq \prob(\cumv_{1}[l]- \cumv_{1}[r]\geq X_{r}-X_{l} \geq \cumv_{2}[l]- \cumv_{2}[r]) \\
	&\leq 3 \max_{n} \prob(X_{r}-X_{l} = n),
\end{align*}
\noindent where the last inequality is deduced from the fact that 
\begin{equation*}
	(\cumv_{1}[l]- \cumv_{1}[r]) - (\cumv_{2}[l]- \cumv_{2}[r]) \in \{0, 1, 2\}.
\end{equation*}
Using Berry-Esseen theorem, it is shown by \citet[Lemma 10]{jung2017online} that $\max_{n} \prob(X_{r}-X_{l} = n) \leq O(\frac{1}{\sqrt{N-i}})$, which implies that 
\begin{equation*}
\cost^{i}_{t}[r^{*}] - \cost^{i}_{t}[l^{*}] \leq O(\frac{1}{\sqrt{N-i}}). 
\end{equation*}
Since $l^{*}$ and $r^{*}$ are arbitrary, and the bound does not depend on $t$, the last inequality proves our assertion. 
\end{proof}

Now we provide similar bounds when the potentials are computed from hinge loss.

\begin{lemma}
\label{lemma:asymptoticErrorB}

Under the same setting as in Theorem \ref{thm:mistakeOptimal} but with potentials built upon hinge loss, we may bound $\potential^{N}_{t}(\textbf{0})$ as following:
\begin{equation*}
	\potential^{N}_{t}(\textbf{0}) \leq (N+1)e^{-\frac{\gamma^{2}N}{2}} .
\end{equation*}
\end{lemma}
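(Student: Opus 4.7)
The plan is to mirror the argument of Lemma \ref{lemma:asymptoticErrorA}, but with an extra factor of $N+1$ arising from the fact that hinge loss, unlike rank loss, is not bounded by $1$. I would begin by invoking the random walk framework: drop the index $t$ and write
\begin{equation*}
\potential^{N}(\textbf{0}) = \E\,\hingeLoss^{\truey}(\Xv^{N}) = w_{\truey}\sum_{l\in\truey}\sum_{r\notin\truey} \E\,(1 + X_{r} - X_{l})_{+},
\end{equation*}
where $\Xv^N = (X_1,\dots,X_k)$ records the counts of each label in $N$ i.i.d. draws from $\unifv^{\truey}_\gamma$.

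Fix a pair $l \in \truey$, $r \notin \truey$ and set $D := X_{r} - X_{l}$. Exactly as in the proof of Lemma \ref{lemma:asymptoticErrorA}, a single draw from $\unifv^{\truey}_\gamma$ contributes $+1$ to $D$ with probability $a := \unifv^{\truey}_\gamma[r]$, $-1$ with probability $a+\gamma = \unifv^{\truey}_\gamma[l]$, and $0$ otherwise; so $D$ is an integer-valued sum of $N$ i.i.d. bounded increments in $\{-1,0,1\}$ with mean $-\gamma$, and $|D|\le N$.

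The key observation is the deterministic pointwise bound
\begin{equation*}
(1+D)_{+} \leq (N+1)\,\ind(D \geq 0),
\end{equation*}
which holds because $D$ is integer-valued: when $D \geq 0$, the left side is at most $1+N$; when $D \leq -1$, both sides vanish. Taking expectations and applying Hoeffding's inequality (as in Lemma \ref{lemma:asymptoticErrorA}) to bound $\prob(D \geq 0) \leq e^{-\gamma^{2}N/2}$ then yields
\begin{equation*}
\E\,(1+D)_{+} \leq (N+1)\,e^{-\gamma^{2}N/2}.
\end{equation*}
Substituting into the first display and using $w_{\truey} \cdot |\truey|\cdot|\truey^{c}| = 1$ gives the stated bound.

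There is no real obstacle: the only subtlety is recognizing that integrality of $D$ allows us to replace a truncated linear quantity by $(N+1)$ times an indicator, which reduces the hinge-loss estimate to the tail-probability estimate already used for rank loss. The multiplicative penalty $N+1$ simply reflects the worst-case magnitude of $1+D$ on the event $\{D\ge 0\}$.
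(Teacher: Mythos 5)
Your proof is correct and takes essentially the same approach as the paper: both arguments reduce the claim to $\E\,(1+X_{r}-X_{l})_{+} \leq (N+1)\,\prob(X_{r}-X_{l}\geq 0)$ for each pair $(l,r)$ and then invoke the Hoeffding tail bound already established in Lemma \ref{lemma:asymptoticErrorA}. The only cosmetic difference is that the paper obtains this via the tail-sum identity $\E\,(1+D)_{+}=\sum_{n=0}^{N}\prob(D\geq n)$ for the integer-valued $D=X_{r}-X_{l}$, whereas you use the equivalent pointwise domination $(1+D)_{+}\leq (N+1)\ind(D\geq 0)$.
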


\begin{proof}
Again we drop $t$ in the proof and introduce $\Xv^{N}$. Then we may write the potential by
\begin{align*}
	\potential^{N}(\textbf{0}) 
	&= \E \hingeLoss^{\truey}(\Xv^{N}) \\
	&= w_{\truey}\sum_{l\in\truey}\sum_{r\notin\truey} \E (1+X_{r}-X_{l})_{+} \\
	&= w_{\truey}\sum_{l\in\truey}\sum_{r\notin\truey} \sum_{n=0}^{N}\prob(X_{r}-X_{l} \geq n) \\
	&\leq w_{\truey}\sum_{l\in\truey}\sum_{r\notin\truey} (N+1)\prob(X_{r}-X_{l} \geq 0).
\end{align*}
We already checked in Lemma \ref{lemma:asymptoticErrorA} that
\begin{equation*}
	\prob(X_{r}-X_{l} \geq 0) \leq e^{-\frac{\gamma^{2}N}{2}},
\end{equation*}
\noindent which concludes the proof. 
\end{proof}

\begin{lemma}
\label{lemma:weightB}

Under the same setting as in Theorem \ref{thm:mistakeOptimal} but with potentials built upon hinge loss, we can show that $\forall i,~w^{i*} \leq 2$.
\end{lemma}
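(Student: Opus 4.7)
\medskip

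\noindent\textbf{Proof proposal for Lemma \ref{lemma:weightB}.}

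The plan is to follow the same template used for Lemma \ref{lemma:weightA} (the rank-loss weight bound), but exploit the $1$-Lipschitz property of the hinge, which makes the bookkeeping considerably simpler. Fix $t$ and $i$, write $\cumv := \cumv^{i-1}_{t}$, and introduce the random vector $\Xv := \Xv^{N-i}$ from the random-walk framework so that
\begin{equation*}
\cost^{i}_{t}[l] \;=\; \potential^{N-i}_{t}(\cumv + \stdv_{l}) \;=\; \E\,\hingeLoss^{\truey_{t}}(\cumv + \stdv_{l} + \Xv).
\end{equation*}
Since hinge loss is proper and this property is inherited by the potentials, $\max_{l}\cost^{i}_{t}[l]$ is attained at some $r^{*}\notin\truey_{t}$ and $\min_{l}\cost^{i}_{t}[l]$ at some $l^{*}\in\truey_{t}$, so it suffices to bound $\cost^{i}_{t}[r^{*}] - \cost^{i}_{t}[l^{*}]$.

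Next I would write this difference as
\begin{equation*}
\E\bigl[\hingeLoss^{\truey_{t}}(\cumv + \stdv_{r^{*}} + \Xv) - \hingeLoss^{\truey_{t}}(\cumv + \stdv_{l^{*}} + \Xv)\bigr]
\end{equation*}
and expand termwise over pairs $(l,r) \in \truey_{t}\times\truey_{t}^{c}$. For a fixed pair, the $(\cdot)_{+}$ map is $1$-Lipschitz, so the contribution of that pair is bounded in absolute value by
\begin{equation*}
d_{l,r} := \bigl|(\ind(r = r^{*}) - \ind(l = r^{*})) - (\ind(r = l^{*}) - \ind(l = l^{*}))\bigr|.
\end{equation*}
The critical observation is that $l\in\truey_{t}$ forces $l\neq r^{*}$ and $r\notin\truey_{t}$ forces $r\neq l^{*}$, so two of the four indicators vanish and $d_{l,r} = \ind(r = r^{*}) + \ind(l = l^{*})\in\{0,1,2\}$.

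Summing $d_{l,r}$ over the pairs gives exactly $|\truey_{t}^{c}| + |\truey_{t}|=k$ (each fixed value of $r^{*}$ contributes once per $l\in\truey_{t}$ and each fixed $l^{*}$ contributes once per $r\in\truey_{t}^{c}$). Multiplying by the normalization $w_{\truey_{t}} = \frac{1}{|\truey_{t}|\cdot|\truey_{t}^{c}|}$ yields
\begin{equation*}
\cost^{i}_{t}[r^{*}] - \cost^{i}_{t}[l^{*}] \leq \frac{k}{|\truey_{t}|\,|\truey_{t}^{c}|} \leq \frac{k}{k-1} \leq 2,
\end{equation*}
using $1\leq |\truey_{t}| \leq k-1$ and $k\geq 2$. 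Taking the maximum over $t$ gives $w^{i*}\leq 2$. I expect the only mildly delicate step to be the combinatorial observation that two of the four indicators in $d_{l,r}$ vanish; everything else is a direct application of $1$-Lipschitzness and the properness-induced identification of the extremizers. Unlike the rank-loss case, no Berry--Esseen-type anti-concentration estimate is required here, which is precisely why the hinge bound is uniform in $N-i$.
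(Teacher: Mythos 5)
Your proof is correct and follows essentially the same route as the paper's: fix $l^{*}\in\truey_{t}$ and $r^{*}\in\truey_{t}^{c}$, express $\cost^{i}_{t}[r^{*}]-\cost^{i}_{t}[l^{*}]$ via the random-walk representation of the potentials, expand over pairs $(l,r)$, and use the $1$-Lipschitzness of $(\cdot)_{+}$. The only difference is that the paper crudely bounds each pair's contribution by $2$ and averages, while your count of which pairs actually move gives the slightly sharper $w^{i*}\leq k/(|\truey_{t}|\,|\truey_{t}^{c}|)\leq k/(k-1)\leq 2$.
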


\begin{proof}
First we fix $t$ and $i$. We also fix $l^{*}\in \truey_{t}$ and $r^{*}\in \truey_{t}^{c}$. Then write $\cumv_{1}:= \cumv^{i-1}_{t}+\stdv_{l*}$ and $\cumv_{2}:= \cumv^{i-1}_{t}+\stdv_{r*}$. Again with $\Xv^{N-i}$, we may write
\begin{align*}
	\cost^{i}_{t}[r^{*}] &- \cost^{i}_{t}[l^{*}] = \potential^{N-i}_{t}(\cumv_{2}	) - \potential^{N-i}_{t}(\cumv_{1})\\
	&=\E[\hingeLoss^{\truey_{t}}(\cumv_{2}+\Xv^{N-i}) - \hingeLoss^{\truey_{t}}(\cumv_{1}+\Xv^{N-i})]\\
	&=w_{\truey_{t}}\sum_{l\in\truey_{t}}\sum_{r\notin\truey_{t}}f(r, l),
\end{align*}
\noindent where 
\begin{align*}
	f(r,l) &:= \E[(1+(\cumv_{2}+\Xv^{N-i})[r] - (\cumv_{2}+\Xv^{N-i})[l])_{+} \\
	&-(1+(\cumv_{1}+\Xv^{N-i})[r] - (\cumv_{1}+\Xv^{N-i})[l])_{+}].
\end{align*}
\noindent It is not hard to check that the term inside the expectation is always bounded above by $2$. This fact along with the definition of $w_{\truey_{t}}$ provides that $\cost^{i}_{t}[r^{*}] - \cost^{i}_{t}[l^{*}] \leq 2$. Since our choice of $l^{*}$ and $r^{*}$ are arbitrary, this proves $\weightv^{i}[t] \leq 2$, which completes the proof. 
\end{proof}

\section{COMPLETE PROOF OF THEOREM \ref{thm:optimality}}
\label{section:proofs}

\begin{proof}
   	We assume that an adversary draws a label $\truey_{t}$ uniformly at random from $2^{[k]}-\{\emptyset, [k]\}$, and the weak learners generate single-label predictions w.r.t. $\probv_{t} \in \Delta [k]$. Any boosting algorithm can only make a final decision by weighted cumulative votes of $N$ weak learners. We manipulate $\probv_{t}$ such that weak learners satisfy OnlineWLC $(\delta, \gamma, S)$ but the best possible performance is close to (\ref{eq:mistakeOptimalA}). 
    
	As we are assuming single-label predictions, $\weakpred_{t}=\stdv_{l_{t}}$ for some $l_{t} \in [k]$ and $\cost_{t} \cdot \weakpred_{t} = \cost_{t}[l_{t}]$. Furthermore, the bounded condition of $\reducedcostseor$ ensures $\cost_{t}[l_{t}]$ is contained in $[0, 1]$. The Azuma-Hoeffding inequality provides that with probability $1-\delta$,     
\begin{align}
    	\begin{split}
    	\label{eq:optimality1}
    	\sum_{t=1}^{T}w_{t}\cost_{t}[l_{t}]
	&\leq \sum_{t=1}^{T}w_{t}\cost_{t} \cdot \probv_{t}  + \sqrt {2||\weightv||^{2}_{2} \ln(\frac{1}{\delta})} \\
	&\leq \sum_{t=1}^{T}w_{t}\cost_{t} \cdot \probv_{t} + \frac{\gamma ||\weightv||^{2}_{2}}{k} + \frac{k\ln(\frac{1}{\delta})}{2\gamma}\\
	&\leq \sum_{t=1}^{T}w_{t}\cost_{t} \cdot \probv_{t} + \frac{\gamma ||\weightv||_{1}}{k} + \frac{k\ln(\frac{1}{\delta})}{2\gamma},
	\end{split}
\end{align}
\noindent where the second inequality holds by arithmetic mean and geometric mean relation and the last inequality holds due to $w_{t} \in [0, 1]$.
    
    We start from providing a lower bound on the number of weak learners. Let $\probv_{t}=\unifv^{\truey_{t}}_{2\gamma}$ for all $t$. This can be done by the constraint $\gamma < \frac{1}{4k}$. From the condition of $\reducedcostseor$ that $\min_{l}\cost[l] = 0, \max_{l}\cost=1$ along with the fact that $\truey \notin \{\emptyset, [k]\}$, we can show that $\cost\cdot(\unifv^{\truey}_{\gamma}-\unifv^{\truey}_{2\gamma}) \geq \frac{\gamma}{k}$. Then the last line of (\ref{eq:optimality1}) becomes 
\begin{align*}
    	\sum_{t=1}^{T}w_{t}&\cost_{t} \cdot \unifv^{\truey_{t}}_{2\gamma} + \frac{\gamma ||\weightv||_{1}}{k} + \frac{k\ln(\frac{1}{\delta})}{2\gamma} \\
	&\leq \sum_{t=1}^{T}(w_{t}\cost_{t}\cdot \unifv^{\truey_{t}}_{\gamma} - \frac{\gamma w_{t}}{k}) +\frac{\gamma ||\weightv||_{1}}{k} + \frac{k\ln(\frac{1}{\delta})}{2\gamma} \\
	&\leq \sum_{t=1}^{T}w_{t}\cost_{t}\cdot \unifv^{\truey_{t}}_{\gamma}+ S,
\end{align*}
\noindent which validates that weak learners indeed satisfy OnlineWLC $(\delta, \gamma, S)$. Following the argument of \citet[Section 13.2.6]{schapire2012boosting}, we can also prove that the optimal choice of weights over the learners is $(\frac{1}{N}, \cdots, \frac{1}{N})$. 

	Now we compute a lower bound for the booster's loss. Let $\Xv := (X_{1}, \cdots, X_{k})$ be a random vector that tracks the number of labels drawn from $N$ i.i.d. random draws w.r.t. $\unifv^{\truey}_{2\gamma}$. Then the expected rank loss of the booster can be written as:
\begin{align*}
\E \rankLoss^{\truey}(\Xv) 
&\geq w_{\truey}\sum_{l\in\truey}\sum_{r\notin\truey}\prob(X_{l}< X_{r}).
\end{align*}
\noindent Adopting the arguments in the proof by \citet[Theorem 4]{jung2017online}, we can show that 
\begin{equation*}
\prob(X_{l}< X_{r}) \geq \Omega(e^{-4Nk^{2}\gamma^{2}}). 
\end{equation*}
\noindent This shows $\E \rankLoss^{\truey}(\Xv) \geq \Omega(e^{-4Nk^{2}\gamma^{2}})$. Setting this value equal to $\epsilon$, we have $N \geq \Omega(\frac{1}{\gamma^{2}}\ln \frac{1}{\epsilon})$, considering $k$ as a fixed constant. This proves the first part of the theorem.

Now we move on to the optimality of sample complexity. We record another inequality that can be checked from the conditions of $\reducedcostseor$: $\cost\cdot(\unifv^{\truey}_{0} - \unifv^{\truey}_{\gamma}) \leq \gamma$. Let $T_{0}:=\frac{S}{4\gamma}$ and define $\probv_{t} = \unifv^{\truey_{t}}_{0}$ for $t \leq T_{0}$ and $\probv_{t} = \unifv^{\truey_{t}}_{2\gamma}$ for $t > T_{0}$. Then for $T \leq T_{0}$, (\ref{eq:optimality1}) implies 
\begin{align}
\begin{split}
\label{eq:optimality2}
    	\sum_{t=1}^{T}&w_{t}\cost_{t}[l_{t}] \\
	&\leq \sum_{t=1}^{T}w_{t}\cost_{t}\cdot\unifv^{\truey_{t}}_{0} + \frac{\gamma||\weightv||_{1}}{k} + \frac{k \ln (\frac{1}{\delta})}{2\gamma} \\
	&\leq \sum_{t=1}^{T}w_{t}\cost_{t}\cdot\unifv^{\truey_{t}}_{\gamma} + \gamma(1+\frac{1}{k})||\weightv||_{1} + \frac{k \ln (\frac{1}{\delta})}{2\gamma} \\
	&\leq \sum_{t=1}^{T}w_{t}\cost_{t}\cdot\unifv^{\truey_{t}}_{\gamma} + S. 
\end{split}
\end{align}
    where the last inequality holds because $||\weightv||_{1} \leq T_{0} = \frac{S}{4\gamma}$. For $ T > T_{0}$, again (\ref{eq:optimality1}) implies 
    \begin{align}
    	\begin{split}
    	\label{eq:optimality3}
    	\sum_{t=1}^{T}&w_{t}\cost_{t}[l_{t}] \leq \sum_{t=1}^{T_{0}}w_{t}\cost_{t}\cdot\unifv^{\truey_{t}}_{0} +\sum_{t=T_{0}+1}^{T}w_{t}\cost_{t}\cdot\unifv^{\truey_{t}}_{2\gamma} \\
	&~~~~~~~~~~~~~~~+ \frac{\gamma ||\weightv||_{1}}{k} + \frac{k \ln(\frac{1}{\delta})}{2\gamma} \\
    	&\leq \sum_{t=1}^{T}w_{t}\cost_{t}\cdot\unifv^{\truey_{t}}_{\gamma} + \frac{k+1}{k}\gamma T_{0}+ \frac{k \ln(\frac{1}{\delta})}{2\gamma}\\
	&\leq \sum_{t=1}^{T}w_{t}\cost_{t}\cdot\unifv^{\truey_{t}}_{\gamma} + S.
    	\end{split}
    \end{align}
    (\ref{eq:optimality2}) and (\ref{eq:optimality3}) prove that the weak learners indeed satisfy OnlineWLC $(\delta, \gamma, S)$. Observing that weak learners do not provide meaningful information for $t \leq T_{0}$, we can claim any online boosting algorithm suffers a loss at least $\Omega(T_{0})$. Therefore to get the certain accuracy, the number of instances $T$ should be at least $\Omega(\frac{T_{0}}{\epsilon}) = \Omega(\frac{S}{\epsilon \gamma})$, which completes the second part of the proof. 
\end{proof}

\end{appendices}

\end{document}